\newtheorem{thm}{Theorem} 
\newtheorem{lemma}{Lemma}
\newtheorem{remark}{Remark}
\newcommand{\R}{\mathbb{R}}
\newcommand{\T}{\top}
\newcommand{\lam}{\lambda}
\newcommand{\ep}{\epsilon}
\newcommand{\lt}{\left}
\newcommand{\rt}{\right}
\newcommand{\td}{\tilde}
\newcommand{\cB}{\mathcal{B}}
\newcommand{\cD}{\mathcal{D}}
\newcommand{\cI}{\mathcal{I}}
\newcommand{\cJ}{\mathcal{J}}
\newcommand{\cP}{\mathcal{P}}
\DeclareMathOperator*{\argmin}{arg\,min}
\DeclareMathOperator*{\Conv}{Conv}
\newcommand*\centermathcell[1]{\omit\hfil$\displaystyle#1$\hfil\ignorespaces}
\newcommand{\B}{\boldsymbol}
\newcommand{\M}{\mathbf}
\newcommand{\sign}{\text{sign}}
\newcommand{\sbt}{\mathrm{s.t. }}
\newenvironment{myarray}[2][1]
{\array{#2}}
{\endarray}
\begin{document}
\title{Solving L1-regularized SVMs and related linear programs: Revisiting the effectiveness of Column and Constraint Generation}
\author{Antoine Dedieu\thanks{[\texttt{adedieu@mit.edu}]. Operations Research Center, MIT. Now at Vicarious AI.}~~~~~Rahul Mazumder\thanks{[\texttt{rahulmaz@mit.edu}]. MIT Sloan School of Management and Operations Research Center, MIT}~~~~Haoyue Wang\thanks{[\texttt{haoyuew@mit.edu}]. Operations Research Center, MIT} }

\date{May, 2021}

\maketitle
\begin{abstract}
The linear Support Vector Machine (SVM) is a classic classification technique in machine learning. 
Motivated by applications in modern high dimensional statistics, we consider penalized SVM problems involving the minimization of a hinge-loss function with a convex sparsity-inducing regularizer such as: the L1-norm on the coefficients, its grouped generalization and the sorted L1-penalty (aka Slope).
	Each problem can be expressed as a Linear Program (LP) and is computationally challenging when the number of features and/or samples is large -- the current state of algorithms for these problems is rather nascent when compared to the usual L2-regularized linear SVM. To this end, we propose new computational algorithms for these LPs
	by bringing together techniques from (a) classical column (and constraint) generation methods and (b) first order methods for non-smooth convex optimization --- techniques that are rarely used together for solving large scale LPs.
	These components have their respective strengths; and while they are found to be useful as separate entities, they have not been used together in the context of solving large scale LPs such as the ones studied herein. 
	Our approach complements the strengths of (a) and (b) --- leading to a scheme that seems to significantly outperform commercial solvers as well as
	specialized implementations for these problems. We present numerical results on a series of real and synthetic datasets demonstrating the surprising 
	effectiveness of classic column/constraint generation methods in the context of challenging LP-based machine learning tasks.
\end{abstract}

\setstretch{1.2}

\section{Introduction}\label{sec: introduction}

The linear Support Vector Machine (SVM)~\cite{vapnik, FHT-09-new} is a fundamental tool for binary classification. Given training data $(\mathbf{x}_i,y_i)_{i=1}^n$ with feature vector $\M{x}_{i} \in \mathbb{R}^p$ and 
label $y_{i} \in \{-1, 1\}$,
the task is to learn a linear classifier of the form $\sign( \mathbf{x} ^T \B{\beta} + \beta_0 )$ where, $\beta_0 \in\mathbb{R}$ is the offset. 
 The popular L2-regularized linear SVM (L2-SVM) considers the minimization problem
\begin{equation} \label{l2-SVM}
\min \limits_{ \B{\beta} \in \mathbb{R}^p , \beta_0 \in \mathbb{R} } 
\sum \limits_{i=1}^n \left( 1 - y_i( \mathbf{x}_i^T \B{\beta} + \beta_0  ) \right)_+  + \frac{\lambda}{2} \| \B{\beta} \|_2^2
\end{equation}
where, $(a)_+:=\max \{ a, 0 \}$ is often noted as the hinge-loss function;
and $\lambda\geq0$ regularizes the L2-norm of the coefficients $\B\beta$.
Several algorithms have been proposed to efficiently solve Problem~\eqref{l2-SVM}.
Popular approaches include stochastic subgradient methods on the primal form~\cite{bottou,pegasos},
coordinate descent methods on a dual~\cite{dual_results} and cutting plane algorithms~\cite{linear-training,optimized-cutting-planes}. 

\paragraph{The L1-SVM estimator:} The L2-SVM estimator generally leads to a dense estimate for $\B\beta$---towards this end, the L1 penalty~\cite{l1-svm-definition,FHT-09-new} is often used as a convex surrogate to encourage 
sparsity (i.e., few nonzeros) in the coefficients. This leads to one of the problems we consider in this paper, namely, the L1-SVM problem:
\begin{equation} \label{l1-SVM}
\min \limits_{ \B{\beta} \in \mathbb{R}^p , \beta_0 \in \mathbb{R} } 
\sum \limits_{i=1}^n \left( 1 - y_i( \mathbf{x}_i^T \B{\beta} + \beta_0  ) \right)_+  + \lambda \| \B{\beta} \|_1, 
\end{equation}
which can be written as a Linear Program (LP), as shown in Section~\ref{sec:pdual-svm}.
The regularization parameter $\lambda \geq 0$ controls the L1-norm of $\B\beta$. Off-the-shelf solvers, including commercial LP solvers (eg, Gurobi, Cplex) work very  well for small/moderate sized problems, but become expensive in solving Problem~\eqref{l1-SVM} when $n$ and/or $p$ is large (e.g. when $n\approx p \approx 10^4$; or $p\approx 10^6$ and $n$ is a few hundreds). 
{ Some high-quality specialized solvers for Problem~\eqref{l1-SVM} include: a homotopy based method to compute the entire (piecewise linear) regularization path in $\B\beta$~\cite{path-SVM}; 
 methods based on Alternating Direction Method of Multipliers (ADMM) or operator splitting~\cite{admm,scs}.
\cite{pang2017parametric} proposes a parametric simplex (PSM) approach to solve Problem~\eqref{l1-SVM}---this leads to a pair of primal/dual solutions at optimality, and the authors demonstrate that their method achieves state-of-the-art performance for some LP-based sparse learning tasks\footnote{The largest example studied in this work is the Dantzig Selector problem with $n=200$ samples and $p=5,000$ features.} compared to a benchmark ADMM-based implementation ``flare"~\cite{li2015flare}.
Our experiments suggest that for problems with small $n \approx 100$ and large $p \approx 50,000$, PSM works well. However, this becomes inefficient as soon as $n$ is large (e.g., for $n \approx 10^4$ and $p\approx 100$, PSM can take hours and large memory, while the methods we propose here take a few seconds with minimal memory). 
 	\cite{mangasarian2006exact} propose a perturbation approach where they reformulate the L1-SVM problem as an unconstrained smooth minimization problem by including an additional regularization term. They apply Newton type methods to solve the resulting problem. Such Newton-type methods as discussed in~\cite{mangasarian2006exact} may not scale to large problem instances, due to expensive matrix inversions.}
In this paper, our goal is to propose new computational algorithms for the L1-SVM LP by revisiting
classical operations research tools such as column and constraint generation with origins in 1950s~\cite{ford1958suggested}---these methods appear to have been somewhat underutilized in the context of the L1-SVM problem (as we discuss below) and relatives of the L1-penalty, that we consider here. To improve the performance of column/constraint generation-based methods we use
(relatively newer) first order optimization techniques.

{We note that there are several appealing L1-regularized classifiers and efficient algorithms that consider a smooth loss function (e.g, logistic, squared hinge loss, etc)---see~
\cite{friedman2010regularization,yuan2010comparison}
(for example). Different loss functions have different operating characteristics: in particular, smooth loss functions lead to estimators that are different from the hinge loss as in the L1-SVM problem~\cite{FHT-09-new}. Our goal in this paper is not to pursue an empirical analysis of the relative merits/de-merits of different loss functions (which are well-known); rather, we focus on algorithms for the hinge loss function, with additional penalty functions that are representable as linear programs.} 

\paragraph{The Group-SVM estimator:} In several applications, sparsity is structured --- the coefficient indices are naturally found to occur in groups that are known a-priori and it is desirable to select (or set to zero) a whole group together as a ``unit''. 
In this context, a group version of the usual L1 norm is often used to improve the performance and interpretability of the model~\citep{yuan2006model, huang2010benefit}. We consider the popular 
L1/$L_\infty$ penalty~\cite{bach2011convex} leading to the  
\emph{Group-SVM} Problem:
\begin{equation} \label{group-SVM-intro}
\min \limits_{ \B{\beta} \in \mathbb{R}^p , \ \beta_0 \in \mathbb{R} } 
\sum \limits_{i=1}^n \left( 1 - y_i( \mathbf{x}_i^T \B{\beta} + \beta_0  ) \right)_+  + \lambda \sum_{g=1}^G \| \B{\beta}_g \|_{\infty}
\end{equation}
where, $g=1, \ldots, G$ denotes a group index (the groups are disjoint), $\B\beta_{g}$ denotes the subvector of coefficients belonging to group $g$ and
$\B{\beta} = \left(\B{\beta}_1, \ldots, \B{\beta}_G \right)$. Problem \eqref{group-SVM-intro} can be expressed as an LP and our approach applies to this problem as well (with suitable modifications).

\paragraph{The Slope-SVM estimator:} The third  problem we study in this paper is of a different flavor and is inspired by the sorted L1-penalty aka the Slope norm~\citep{slope-proximal,bellec2018slope}, popularly used in the context of penalized least squares problems for its useful statistical properties.  
For a (a-priori specified) sequence $\lambda_1 \geq \ldots \geq \lambda_p \geq 0$,  the Slope-SVM problem is given by:
\begin{equation} \label{Slope-intro}
\min \limits_{ \B{\beta} \in \mathbb{R}^p , \ \beta_0 \in \mathbb{R} } 
\sum \limits_{i=1}^n \left( 1 - y_i( \mathbf{x}_i^T \B{\beta} + \beta_0  ) \right)_+  + \sum_{j=1}^p \lambda_j | \beta_{(j)} |,
\end{equation}
where $|\beta_{(1)}| \ge \ldots \ge |\beta_{(p)}|$ are the ordered values of $|\beta_{i}|, i =1, \ldots, p$. Unlike Problems~\eqref{l1-SVM} and~\eqref{group-SVM-intro} where the penalty function is separable (respectively, block separable), the penalty function in~\eqref{Slope-intro} is not separable in the coefficients.
We show in Section \ref{sec: group-L1-SVM} that Problem \eqref{Slope-intro} can be expressed as an LP with 
$O(n+p)$ variables and an exponential number (in $p$) of constraints, consequently posing challenges in optimization. Despite the large number of constraints, we show that the LP can be solved using a non-standard application of column/constraint generation. 
We note that using standard reformulation methods~\cite{boyd2004convex} (see Section~\ref{epi-slope-1}), Problem~\eqref{Slope-intro} 
can be modeled (e.g., using {\texttt{CVXPY}}) and solved (e.g., using a commercial solver like Gurobi) for small-sized problems. However, the computations become expensive when 
$\lambda_{i}$s are distinct (e.g., as in~\cite{slope-proximal}) --- for these cases,  
{\texttt{CVXPY}} can handle problems up to $n \approx 100$, $p\approx 200$ whereas, our approach can solve problems with $p \approx 50,000$ within a few seconds.

\paragraph{First order methods:} First order methods~\cite{nesterov2004introductorynew} have enjoyed great success in solving large scale structured convex optimization problems arising in machine learning applications. Many of these methods (such as proximal gradient and its accelerated variants) are appealing candidates for  the 
minimization of smooth functions and also problems of the composite form~\cite{nesterov2013gradient}, wherein accelerated gradient methods enjoy a convergence rate of $O(1/\sqrt{\epsilon})$ to obtain an $\epsilon$-accurate solution. 
For the nonsmooth SVM problems (\eqref{l1-SVM},\eqref{group-SVM-intro},\eqref{Slope-intro}) discussed above, Nesterov's smoothing method~\cite{nesterov-smoothing} (which replaces the hinge-loss with a smooth approximation)
 can be used to obtain algorithms with a convergence rate of 
$O(1/{\epsilon})$---a method that we explore in Section~\ref{sec: FOM}. 
While this procedure (with additional heuristics based on~\cite{tibshirani2012strong}) lead to low accuracy solutions relatively fast; in our experience, the basic version of this algorithm 
takes a long time to obtain a solution with
 higher accuracy when $n$ and/or $p$ are large.
Similarly, first order methods based on~\cite{becker2011templates} and~\cite{scs,admm} also experience increased run times as the problem sizes become large.

\smallskip 

\noindent {\bf{What this paper is about:}}  
In this paper, we propose an efficient algorithmic framework for L1-SVM, Group-SVM and Slope-SVM using tools in column/constraint generation (and, leveraging the capabilities of excellent LP solvers) that make use of
some basic structural properties of solutions to these problems, as we discuss below. 

Specifically, large values of $\lambda$ will encourage an optimal solution to Problem~\eqref{l1-SVM}, $\hat{\B\beta}$ (say), 
 to be sparse. This sparsity will be critical to solve Problem~\eqref{l1-SVM} when $p \gg n$---we anticipate to solve Problem~\eqref{l1-SVM} without having to create an LP model with all $p$ variables. To this end, we use column generation, a classical method in mathematical optimization/operations research originating in the context of solving integer programs during late 1950s~\cite{ford1958suggested,dantzig1960decomposition} (see also~\cite{desrosiers2005primer} for a nice review). 
{We also make use of another structural aspect of a solution to Problem~\eqref{l1-SVM} when
 $n$ is large (and $p$ is small). Suppose most of the samples can be classified correctly via a linear classifier---then, at an optimal solution, $1 \leq y_i( \mathbf{x}_i^T \B{\beta} + \beta_0)$ and hence $\tilde{\alpha}_{i}:=(1 - y_i( \mathbf{x}_i^T \B{\beta} + \beta_0))_+$ will be zero for many indices $i=1,\ldots, n$.}
We leverage this sparsity in $\tilde{\alpha}_{i}$'s to develop efficient algorithms for Problem~\eqref{l1-SVM}, 
using constraint generation~\cite{ford1958suggested,desrosiers2005primer} methods.
This allows us to solve~\eqref{l1-SVM} without explicitly creating an LP model with $n$ samples.

To summarize, there are two characteristics special to an optimal solution of Problem~\eqref{l1-SVM}:
{(a)}~sparsity in the SVM coefficients, i.e., $\B\beta$ and/or 
{(b)}~sparsity in $\tilde{\alpha}_{i}$'s.  Column generation can be used to handle (a); constraint generation can be used to address (b) --- in problems where both $n,p$ are large, we propose to combine both 
column and constraint generation. To our knowledge, while column generation and constraint generation are used separately in the context of solving large scale LPs, using them together, 
in the context of the L1-SVM problem is novel. 
For solving these (usually small) subproblems, we rely on powerful LP solvers (e.g., simplex based algorithms of Gurobi) which lead to a pair of primal-dual solutions and also possess excellent warm-starting capabilities.
Our approach applies to the Group-SVM Problem~\eqref{group-SVM-intro} with suitable modifications.
We also extend our approach to handle the Slope-SVM problem~\eqref{Slope-intro}, which requires a fairly involved use of column/constraint generation. 
Numerical evidence presented here suggests that column/constraint generation methods are modular, simple and powerful tools---they should perhaps be considered more frequently to solve machine learning tasks based on LPs, even beyond the ones studied here.

The column/constraint generation methods mentioned above, are found to benefit from a good initialization. 
To this end, we use  first order optimization methods to get approximate solutions with low computational cost. These solutions serve as decent initializations and are subsequently improved to deliver optimal solutions as a part of our column and/or constraint generation framework. This approach is found to be useful in all the three problems studied here.

To our knowledge, this is the first paper that brings together first order methods in  convex optimization and column/constraint generation algorithms for solving large scale LPs, in the context of solving a problem of key importance in machine learning. Implementation of our methods can be found at:\\
{\small \url{https://github.com/wanghaoyue123/Column-and-constraint-generation-for-L1-SVM-and-cousins}}.

\noindent {\bf Organization of paper:} The rest of this paper is organized as follows. Section \ref{sec: CG} presents an overview of 
column/constraint generation methods; and then discusses their instantiation 
for the L1-SVM and Group-SVM problems. 
Section \ref{sec: group-L1-SVM} discusses the Slope-SVM problem.
Section \ref{sec: FOM} discusses how first order methods can be used to get approximate solutions for these problems.
Section \ref{sec: computation} presents numerical results.

\smallskip

\noindent {\bf Notation:}
For an integer $a$ we use $[a]$ to denote $\{1, 2, \ldots, a\}$. The $i$th entry of a vector $\M{u}$ is denoted by $u_i$.
For a set ${\mathcal A}$, we use the notation $| \mathcal A |$ to denote its size. For a positive semidefinite matrix $\M{A}$, 
we denote its largest eigenvalue by $\sigma_{\max}(\M{A})$. For a vector $\M{x}\in \R^n$ and a subset $\cB \subseteq [n]$, we let $\M{x}_{\cB}$ denote the sub-vector of $\M{x}$ corresponding to the indices in $\cB$.

\section{Column and constraint generation for L1-SVM and its group extension}\label{sec: CG}

{
As we allude to earlier, column and constraint generation algorithms have a long history in 
mathematical optimization and operations research~\cite{ford1958suggested,dantzig1960decomposition}.
Here we present an outline  of these methods for a generic LP. We subsequently discuss their applications to Problems~\eqref{l1-SVM} and~\eqref{group-SVM-intro}.
}

\subsection{Methodology for column and constraint generation}\label{sec: CG-LP}

{
The basic idea of column generation is to start with a candidate set of columns and incrementally add new columns into the model until some optimality conditions are met.
Consider the \textit{primal} LP problem \textbf{(P)} where, $\bar{n}, \bar{p}$ are integers and $\mathbf{A} \in \mathbb{R}^{\bar{n} \times \bar{p}}, \ \mathbf{b} \in \mathbb{R}^{\bar{n}}, \ \mathbf{c} \in \mathbb{R}^{\bar{p}}$ are problem data.
We  assume that the optimal objective value of \textbf{(P)}  is finite.
The strong duality theorem (cf \cite{LP}, Theorem 4.4) states that this optimum is equal to that of the \textit{dual} problem \textbf{(D)}: 
}
\begin{equation*}\label{classical-lp}
\begin{myarray}[1.1]{c c c c c c c c c r}
\mathbf{(P)}: & \min \limits_{\B{\theta} \in \mathbb{R}^{\bar{p}}} &  \mathbf{c}^T \B{\theta}  & &  ~~~~~~~~~~~~~~~~~~~~
& \mathbf{(D)}: & \max \limits_{ \mathbf{q} \in \mathbb{R}^{\bar{n}}}  &  \mathbf{q}^T \mathbf{b}  & &\\
& \sbt & \mathbf{A}\B{\theta} \ge \mathbf{b}, & \B{\theta} \ge \M{0}   & ~~~~~~~~~~~~~~~~~~~~
& & \sbt & \mathbf{q}^T \mathbf{A}  \le \mathbf{c} & \mathbf{q} \ge \M{0}. &\\
\end{myarray}
\end{equation*}
Let us consider the case where $\bar{p}$ is large compared to $\bar{n}$ and we anticipate an optimal solution of $(\M{P})$ to have few nonzeros. 
Let $\M{A}_{j}$ denote the $j$th column of $\M{A}$. Consider a subset of columns 
$\mathcal{B} \subset [\bar{p}]$ and the corresponding \emph{reduced} primal/dual problems:
\begin{equation*}\label{classical-lp-restricted}
\begin{myarray}[1.1]{c c c c c c  c }
(\mathbf{P}_{\mathcal B}): & \min\limits_{\B\theta_{\mathcal{B}} \in \mathbb{R}^{|\mathcal{B}|}} &  \sum\limits_{j \in \mathcal{B}} {c}_{j}\theta_{j}  
&~~~~& (\mathbf{D}_{\mathcal B}): & \max \limits_{ \mathbf{q} \in \mathbb{R}^{\bar{n}}}  &  \mathbf{q}^T \mathbf{b}  \\
& \sbt & \sum\limits_{j \in {\mathcal B}}\mathbf{A}_{j} {\theta}_{j} \ge \mathbf{b},~~\B{\theta}_{{\mathcal B}} \ge \M{0}  &~~~&& \sbt & \M{q}^T \M{A}_j  \leq {c}_{j}, ~ j \in {\mathcal B}, ~~\mathbf{q} \ge \M{0}. 
\end{myarray}
\end{equation*}

Let $\tilde{\B\theta}_{\mathcal B}$ and $\tilde{\M{q}}$ be a pair of primal/dual solutions to the restricted problems $(\mathbf{P}_{\mathcal B})$ and $(\mathbf{D}_{\mathcal B})$. 
Let $\hat{\B\theta}$ be an extension of $\tilde{\B\theta}_{\mathcal B}$ to $\mathbb{R}^{\bar{p}}$ i.e., 
$\hat{\B\theta}_{\mathcal B}=\tilde{\B\theta}_{\mathcal B}$ and $\hat{\B\theta}_{{\mathcal B}^c}=\M{0}$.
While $\hat{\B\theta}$ is a feasible solution for $(\M{P})$, it may not be optimal for $(\M{P})$. An optimality certificate can be obtained via $\tilde{\M{q}}$, by checking if $\tilde{\M{q}}$ is feasible for $(\M{D})$. 
Specifically, let the \emph{reduced cost} for variable $j$ be defined as $\bar{c}_j:=c_{j} - \tilde{\M{q}}^T\M{A}_j$. 
If $\bar{c}_{j} \geq 0$ for all $j \notin {\mathcal B}$, then $\tilde{\M{q}}$ is optimal for $(\M{D})$; and $\hat{\B\theta}$ is an optimal solution for $(\M{P})$.\\
\noindent {\bf Column generation:} Column generation makes use of the scheme outlined above. It is useful when $\bar{p}\gg \bar{n}$, and an optimal solution to $(\M{P})$ has few nonzeros. We start with a subset of columns say, ${\mathcal B}$---i.e., a guess for the support of a minimizer of $(\M{P})$ for which $(\mathbf{P}_{\mathcal B})$ is feasible. Given $\mathcal B$, we solve the \emph{restricted} problem $(\M{P}_{\mathcal B})$; and have a pair of primal/dual solutions for $(\M{P}_{\mathcal B})$/$(\M{D}_{\mathcal B})$.
If all the reduced costs are nonnegative, we declare convergence and stop. Otherwise, we find a column (or a subset of columns) outside $\mathcal B$ with the most negative reduced cost(s), update $\mathcal B$, and re-solve the updated problem $(\M{P}_{\mathcal B})$ by making use of the warm-start capabilities of a simplex-based 
LP solver. If ${\mathcal B}$ is only allowed to increase (i.e., we do not drop variables) then this process converges after finitely many iterations.
Convergence guarantees of this procedure is formally discussed in Section~\ref{sec:conv-CG}.
Upon termination, column generation leads to a pair of primal/dual optimal solutions to $(\M{P})$/$(\M{D})$.\\
\noindent{\bf Constraint generation:} We now consider the case when $\bar{n} \gg \bar{p}$. Suppose at an optimal solution to \textbf{(P)}, only a small fraction of the $\bar{n}$ constraints $\mathbf{a}_i^T\B\theta \geq b_{i}$ for $i \in [\bar{n}]$ are active or binding. Then, an optimal solution can be obtained by considering only a small subset of the $\bar{n}$ constraints. 
This inspires the use of a constraint generation algorithm, which can also be interpreted as column generation~\cite{LP} on the dual Problem \textbf{(D)}.

\subsection{Primal and dual formulations of L1-SVM}\label{sec:pdual-svm}
We present an LP formulation for Problem~\eqref{l1-SVM}:

	\begin{subequations}\label{l1-SVM-primal-main} 
		\begin{align}
		\boxed{\mathcal{P}_{ \lam} \left( [n], [p] \right)}~~~~~~~~~~~~&\min \limits_{ \substack{ \B{\xi} \in \mathbb{R}^n, \beta_0 \in \mathbb{R}  \\  \B{\beta}^+,\  \B{\beta}^- \in \mathbb{R}^p } } &  \sum \limits_{i=1}^n \xi_i  + \lambda \sum \limits_{j=1}^p \beta^+_j + \lambda \sum \limits_{j=1}^p \beta^-_j& \label{l1-SVM-primal}   \\
		&\sbt & \;\; \ \xi_i + y_i \mathbf{x}_i^T \B{\beta}^+ - y_i \mathbf{x}_i^T \B{\beta}^-  + y_i \beta_0 \ge 1 &  \;\;\;\;\; i \in [n]  \label{line-1-l1svm-const} \\ 
		&& \;\;\;\; \ \B{\xi} \ge 0, \ \B{\beta}^+ \ge 0, \ \B{\beta}^- \ge 0.  &  \nonumber
		\end{align}
	\end{subequations}

Above, the positive and negative parts of $\beta_i$ are denoted as $\beta^+_i = \max \{\beta_i, 0\}$ and $\beta^-_i = \max \{-\beta_i, 0 \}$ respectively,
and $\xi_{i}$'s are auxiliary continuous variables corresponding to the hinge-loss function. 
The feasible set of Problem \eqref{l1-SVM-primal-main} is nonempty. A dual of~\eqref{l1-SVM-primal-main} is the following LP:

	\begin{equation}\label{l1-SVM-dual}
	\begin{myarray}[1.1]{c c c c r}
	\boxed{\mathcal{D}_{ \lam} \left( [n], [p] \right)}~~~~~~~~\;\;\;\;& \max \limits_{\B{\pi} \in \mathbb{R}^{n } } & \centermathcell{ \sum \limits_{i=1}^n \pi_i }   \\
	&\;\;\; \sbt  & \centermathcell{ \;\;\; -\lambda \le  \sum \limits_{i=1}^n y_i x_{ij} \pi_i \le \lambda \;\;\; }   & \;\;\;\;\; j \in [p] \\ 
	& & \centermathcell{  \mathbf{y}^T \B{\pi} = 0 }  \\
	& &  \centermathcell{ 0\le \pi_i \le 1 } & \;\;\;\;\; i \in [n]. 
	\end{myarray}
	\end{equation}
For Problems~\eqref{l1-SVM-primal-main} and \eqref{l1-SVM-dual}, standard complementary slackness conditions lead to:
\begin{equation}\label{Kuhn-Tucker}
\begin{myarray}[1.1]{c c c c r}
(1-\pi_i) \xi_i = 0, &  & \pi_i \left( \xi_i + y_i \mathbf{x}_i^T \B{\beta} + y_i \beta_0 - 1  \right)  = 0  & \;\;\;\;\; i \in [n].
\end{myarray}
\end{equation}
Let $(\B{\beta}^*(\lambda), \beta_0^*(\lambda))$ and $\B{\pi}^*(\lambda)$ denote optimal solutions for Problems \eqref{l1-SVM-primal-main} and \eqref{l1-SVM-dual}. In what follows, for notational convenience, we will drop the dependence (of an optimal solution) on $\lambda$ when there is no confusion.  We make a few observations regarding the geometry of an L1-SVM solution following standard SVM terminology~\cite{FHT-09-new}. For easier notation, 
we denote $\alpha_{i}=y_i \mathbf{x}_i^T (\B{\beta}^+ - \B\beta^{-})+ y_i \beta_0$ for all $i$. 
Note that $\xi_{i} = \max \{ 1-\alpha_{i}, 0\}$ for all $i$. 
{If a point $i$ is correctly classified, we have $\xi_{i}=0$; and if this point is away from the margin, then we have
$0> 1 - \alpha_{i}$ and hence $\pi_{i}=0$ (from~\eqref{Kuhn-Tucker}). 
Note that if point $i$ is misclassified, then $\xi_{i} >0$ and  $\pi_{i}=1$. 
Furthermore, based on the value of $\pi_{i}$, we have the following cases: 
(i) If $\pi_{i}=0$, then $\alpha_{i}\geq 1$; (ii) If $\pi_{i}=1$ then $1 \geq \alpha_{i}$ and 
(iii) If $\pi_i \in (0,1)$ then $\alpha_{i}=1$. The SVM coefficients can be estimated from the samples lying on the margin i.e., for all $i$ such that $\alpha_{i}=1$. In particular, 
if an optimal solution to the L1-SVM problem has $\kappa$-many nonzeros in $\B\beta$, then $(\B\beta,\beta_0)$ can be computed based on $(\kappa+1)$-many samples lying on the margin (assuming that the corresponding feature columns form a  full rank matrix). }

\subsection{Column generation for L1-SVM}\label{sec: SVM-CG}
We discuss how column generation applies
to the L1-SVM Problem~\eqref{l1-SVM} for a given $\lambda$---we consider cases where
$p \gg n$ (and $n$ is small).
Given a set of candidate features $\mathcal{J} \subset \left\{1,\ldots,p \right\}$ (we discuss later how to initialize $\mathcal J$), we form the \textit{restricted columns} L1-SVM problem as
	\begin{equation}\label{l1-SVM-restricted-primal}
	\begin{myarray}[1.1]{c c c r}
	\boxed{\mathcal{P}_{ \lam} \left( [n], \mathcal{J} \right)}~ & \min \limits_{ \substack{\B{\xi} \in \mathbb{R}^n, \beta_0  \in \mathbb{R} \\   \B{\beta}^+,\  \B{\beta}^- \in \mathbb{R}^{| \mathcal{J}| } } }  &  \sum \limits_{i=1}^n \xi_i  + \lambda \sum \limits_{j \in \mathcal{J} } \beta^+_j + \lambda \sum \limits_{j \in \mathcal{J}} \beta^-_j& \\
	~~~~~~&\sbt & \;\; \ \xi_i +  \sum \limits_{j \in \mathcal{J} }  y_i x_{ij} \beta_j^+ - \sum \limits_{j \in \mathcal{J} }  y_i x_{ij} \beta_j^-  + y_i \beta_0 \ge 1 & \;\;\; i\in [n]\\
	&& \;\;\;\; \ \B{\xi} \ge 0, \ \B{\beta}^+ \ge 0, \ \B{\beta}^- \ge 0
	\end{myarray}
	\end{equation}
and the corresponding dual problem is 
\begin{equation}\label{l1-SVM-colcons-dual}
\begin{myarray}[1.1]{c c c c r}
\boxed{\cD_{\lam}([n], \cJ)}:\;\;\;\;& \max \limits_{\B{\pi} \in \mathbb{R}^{n } } & \centermathcell{ \sum \limits_{i=1}^n \pi_i }   \\
&\;\;\; \sbt  & \centermathcell{ \;\;\; -\lambda \le  \sum \limits_{i=1}^n y_i x_{ij} \pi_i \le \lambda \;\;\; }   & \;\;\;\;\; j \in \cJ \\ 
& & \centermathcell{  \mathbf{y}^T \B{\pi} = 0 }  \\
& &  \centermathcell{ 0\le \pi_i \le 1 } & \;\;\;\;\; i \in [n]. 
\end{myarray}
\end{equation}

If $(\hat{\beta}^+_j,\hat{\beta}^-_j), j \in {\mathcal J}$ is a solution to~\eqref{l1-SVM-restricted-primal}, it can be extended to a feasible solution to~\eqref{l1-SVM-primal-main} by padding coordinates outside ${\mathcal J}$ 
with zeros---we let $\hat{\beta}_{j}=\hat{\beta}_{j}^+ - \hat{\beta}_j^{-}$ (for $j \in [p]$) denote the corresponding feature weights. 
Hence, the minimum of~\eqref{l1-SVM-restricted-primal} is an upper bound for that of~\eqref{l1-SVM-primal-main}.
Let $\hat{\B\pi} \in \R^n$ be an optimal solution of \eqref{l1-SVM-colcons-dual}. 
If $\hat{\B\pi}$ is feasible for~\eqref{l1-SVM-dual} then it is an optimal solution for~\eqref{l1-SVM-dual}; and 
$\hat{\B\beta}$ is an optimal solution to~\eqref{l1-SVM-primal-main}.
Otherwise, $\hat{\B\pi}$ is not an optimal solution; and some inequality constraints in~\eqref{l1-SVM-dual} are violated. 
For $\beta_{j}^+$, we denote its corresponding reduced cost (cf. Section \ref{sec: CG-LP}) by $\bar{\beta}^{+}_j$. A similar notation is used for $\beta_{j}^-$.
For every pair $\beta^+_j, \ \beta^-_j, \ j \notin \mathcal{J}$, the minimum of their reduced 
costs is
\begin{equation} \label{reduced-cost}
\min \left\{ \bar{\beta}^{+}_j,  \bar{\beta}^{-}_j \right\} = \lambda - \left|   \sum  \nolimits_{i\in[n]}  y_i x_{ij} \hat{\pi}_i  \right|.
\end{equation}
We select a subset of indices with negative reduced costs\footnote{For the column generation procedure to converge, it is not required to choose indices with the smallest reduced costs---any subset of indices with negative reduced costs can be selected.} and append it to ${\mathcal J}$ to form the new restricted L1-SVM problem; which is re-solved via LP warm-starting. The process repeats till convergence. For a tolerance level $\epsilon>0$ (e.g, $\epsilon=10^{-2}$), we update $\mathcal{J}$ by adding all columns $j$ for which the corresponding reduced cost~\eqref{reduced-cost}
is lower than `$-\epsilon$'. We summarize the algorithm below.

\medskip 

\noindent \textsc{Algorithm~1}: \textbf{Column generation for L1-SVM} \newline
\textbf{Input:} $\mathbf{X}$, $\mathbf{y}$, regularization parameter $\lambda$, a convergence threshold $\epsilon \geq 0$, a set of columns $\mathcal{J}$.
\noindent \textbf{Output: }A near-optimal solution $\hat{\B{\beta}}$ for the L1-SVM Problem \eqref{l1-SVM}.
\begin{enumerate}
	\item Repeat Steps 2 to 3 until $\mathcal{J}$ stabilizes.
	
	\item Solve the problem $\mathcal{P}_{ \lam } \left( [n], \mathcal{J} \right)$ (cf Problem~\eqref{l1-SVM-restricted-primal}).

	\item Form the set $\mathcal{J}^{\epsilon}$ of columns in $\left\{1,\ldots,p \right\} \backslash \mathcal{J}$ with reduced cost lower than $-\epsilon$.  
	Update  $\mathcal{J} \leftarrow \mathcal{J} \cup \mathcal{J}^{\epsilon}$; and go to Step 2.
\end{enumerate}

\subsubsection{Convergence of Column generation}\label{sec:conv-CG}
Algorithm~1 expands (with no deletion) the set of columns in the restricted problem---this converges in a finite number of iterations, bounded above by $p$. In the worst case, 
one may need to add all the columns in~\eqref{l1-SVM-primal-main}.
The cost of Algorithm~1 also depends upon the size of $\mathcal J$---if this becomes comparable to $p$, then the cost of solving~\eqref{l1-SVM-restricted-primal} will be large.
However, assuming that a solution to the L1-SVM problem corresponds to a sparse $\B\beta$,
then using a good initialization for $\mathcal J$ and/or regularization path continuation (discussed in Section~\ref{sec: correlation}), the worst case behavior is not observed in practice (see our results in Section~\ref{sec: computation}).
In addition, thanks to simplex warm-start capabilities, one can compute a solution to the updated version of the restricted L1-SVM problem quite efficiently.\\
An appealing aspect of the column generation framework is that it provides a bound on the optimality gap, even if one were to terminate early. We present the following result.

\begin{thm}\label{thm-one}
	Let $z^*$ and $\hat{z}$ denote the optimal objective values of the full L1-SVM Problem \eqref{l1-SVM-primal-main} and the restricted L1-SVM Problem \eqref{l1-SVM-restricted-primal}, respectively. 
	At an iteration of Algorithm~1, define
	\begin{eqnarray}
	\td\ep:= - \min_{j\in [p] \setminus \cJ} \lt\{ \min \{ \bar \beta_j^+ ,  \bar \beta_j^-  ,0  \}  \rt\} \ge 0. 
	\end{eqnarray}
	where $\bar\beta_j^+$ and $\bar\beta_j^-$ are defined in \eqref{reduced-cost}.
	If  $\B{\beta}^*$ denotes an optimal solution of Problem \eqref{l1-SVM-primal-main}, 
then it holds that: 
	$$z^* \le \hat z \le z^* + \td\ep \| \B{\beta}^* \|_{1} . $$
\end{thm}
\begin{proof}
	The first inequality $z^* \le \hat z$ is trivial (as column generation leads to a feasible solution for the full LP~\eqref{l1-SVM-primal-main}). Below we prove the second inequality. 
	From \eqref{reduced-cost} and our definition of $\td\ep$, we have
	\begin{eqnarray}
	-\lambda - \td\ep \le    \sum  \nolimits_{i\in[n]}  y_i x_{ij} \hat{\pi}_i  \le \lam  + \td\ep ~~~~ \forall j\in [p] \setminus \cJ. \nonumber
	\end{eqnarray}
	Since $\hat{\B{\pi}} $ is a  solution of $D_{\lam} ([n], \cJ)$, we also have 
		\begin{eqnarray}
	-\lambda  \le    \sum  \nolimits_{i\in[n]}  y_i x_{ij} \hat{\pi}_i  \le \lam   ~~~~ \forall j\in \cJ, \nonumber
	\end{eqnarray}
	$ \B{y}^\T {\hat{\B{\pi}}} =0 $ and $0\le\hat \pi_i \le 1$ for all $i\in [n]$---hence, $ \hat{\B{\pi}}$ is a feasible solution of $\cD_{\lam+\td\ep} ([n], [p])$. On the other hand, let $(\B{\xi}^*, \B{\beta}^{*+}, \B{\beta}^{*-}, {\beta}_0^*)$ be an optimal solution of $\cP_{\lam} ([n], [p])$. Then it is also a feasible solution of $\cP_{\lam+\td\ep} ([n], [p])$. By weak duality, we have 
	\begin{eqnarray}
	\hat z= \sum_{i=1}^n \hat \pi_i \le 
	 \sum \limits_{i=1}^n \xi_i^*  + (\lam+\td\ep) \sum \limits_{j=1}^p (\beta^{*+}_j  +  \beta^{*-}_j) = 
	 z^* + \td\ep  \| \B{\beta}^* \|_1, \nonumber
	\end{eqnarray}
	which completes the proof.
\end{proof}
Theorem~\ref{thm-one} is an adaptation of a result stated in~\cite{desrosiers2005primer} without proof. 
Theorem~\ref{thm-one} states the optimality gap is of the order of the smallest nonpositive 
reduced cost corresponding to the SVM coefficients. Note that the value $\td \ep$ satisfies $\td \ep \le \ep$ where $\ep$ is the value of tolerance we set in Algorithm~1.
When $\td\ep=0$, we are at an optimal solution.

There are variants of the column generation procedure where one can also drop variables (instead of continually expanding the set of columns in the restricted problem)---see~\cite{desrosiers2005primer}. 
This is useful if the size of $\mathcal J$ becomes so large that the restricted problem becomes difficult to solve. If we only expand the set of columns in the restricted problem, as in Algorithm~1, we obtain a sequence of decreasing objective values across the column generation iterations. If one were to both add and delete columns, one may not obtain a monotone sequence of objective values. In this case, additional care is needed to ensure convergence of the resulting procedure~\cite{desrosiers2005primer} (though Theorem~\ref{thm-one} provides an optimality gap for a given reduced cost tolerance).

\paragraph{Initializing column generation with a candidate set of columns:} 
In practice, Algorithm~1 is found to benefit from a good initial choice for $\mathcal{J}$. To obtain a reasonable estimate of $\mathcal{J}$ with low computational cost, we list a couple of options that we found to be useful.\\
\noindent (i) \emph{First order methods:} Section \ref{sec: FOM} discusses first order methods to obtain an approximate solution to the L1-SVM problem, which can be used to initialize $\mathcal{J}$. \\
(ii) \emph{Regularization path:} We compute a path (or grid) of solutions to L1-SVM (with column generation) for a decreasing sequence of $\lambda$ 
values (e.g., $\lambda \in \{ \lambda_0, \ldots, \lambda_M \}$)
with the smallest one set to the current value of interest.
This method is discussed in Section~\ref{sec: correlation}.

\subsubsection{Computing a regularization path with column generation} \label{sec: correlation}

Note that the subgradient condition of optimality for the L1-SVM Problem~\eqref{l1-SVM} is given by:
$\lambda  \sign(\beta_j^*) =  \sum \limits_{i=1}^n y_i x_{ij} \pi_i^*$
where, for a scalar $u$, $\sign(u)$ denotes a subgradient of $u \mapsto |u|$.
When $\lambda$ is larger than $\lambda_{\max} = \max_{j \in [p]} \sum_{i=1}^n |x_{ij}|$,  an optimal solution to 
Problem~\eqref{l1-SVM} is zero: $\B{\beta}^*(\lambda) = \mathbf{0}.$

Let $\mathcal{I}_+, \ \mathcal{I}_-$ denote the sample indices corresponding to the classes with labels $+1$ and $-1$ (respectively); and let $N_+, \ N_-$ denote their respective sizes. 
If  $N_+ \ge N_-$, then for $\lambda\ge \lambda_{\max}$  a solution to Problem \eqref{l1-SVM-dual} is $\pi_i(\lambda) = N_-/N_+, \forall i \in  \mathcal{I}_+$ and $\pi_i(\lambda) = 1, \forall i \in  \mathcal{I}_-$. For 
$\lambda = \lambda_{\max}$ using~\eqref{reduced-cost}, the minimum of the reduced costs of the variables $\beta^+_j$ and $\beta^-_j$  is
\begin{equation} \label{reduced-cost-primal}
\min \left\{ \bar{\beta}^{+}_j(\lambda_{\max}),  \bar{\beta}^{-}_j(\lambda_{\max}) \right\} = \lambda_{\max} - \left|  \frac{N_-}{N_+} \sum \limits_{i \in  \mathcal{I}_+} y_i x_{ij} + \sum \limits_{i \in  \mathcal{I}_-} y_i x_{ij} \right|.
\end{equation}
When $\lambda=\lambda_1$ is slightly smaller than $\lambda_{\max}$, we initialize column generation by selecting $\mathcal{J}$ as a small subset of variables that minimize the right-hand side of~\eqref{reduced-cost-primal}. 
Once we obtain a solution to Problem~\eqref{l1-SVM} at $\lambda_1$, we 
compute a solution for a smaller value of $\lambda$ by using LP warm-start along with column generation.
Consequently, this leads to solutions for a grid of $\lambda$ values, as
summarized below.

\medskip 
\medskip

\noindent \textsc{Algorithm~2:} {\bf{Regularization path algorithm for L1-SVM}}

\noindent \textbf{Input:} $\mathbf{X}$, $\mathbf{y}$, convergence tolerance $\epsilon$ (for column generation), a grid of decreasing $\lambda$ values: $\{\lambda_{0} = \lambda_{\max},\ldots,\lambda_{M}=\lambda \}$, a small integer $j_0$. \newline
\textbf{Output: } A sequence of solutions  $\left\{  \B{\beta}^*(\lambda_0),\ldots, \B{\beta}^*(\lambda_M)  \right\}$
on a grid of $\lambda$-values for the L1-SVM Problem \eqref{l1-SVM}.
\begin{enumerate} 
	\item Let $\B{\beta}^*(\lambda_0) = \M{0}$ and assign $\mathcal{J}(\lambda_0)$ to the $j_0$ variables minimizing the rhs of~\eqref{reduced-cost-primal}.
	
	\item For $\ell \in\left\{1,\ldots,M \right\}$ initialize $\mathcal{J}(\lambda_{\ell}) \leftarrow \mathcal{J}(\lambda_{\ell-1}), \ \B{\beta}^*(\lambda_{\ell}) \leftarrow \B{\beta}^*(\lambda_{\ell-1})$. Run column generation to obtain the new estimate $\B{\beta}^*(\lambda_{\ell})$ with $\mathcal{J}(\lambda_{\ell})$ denoting the corresponding columns.
\end{enumerate}

\subsection{Column and constraint generation for L1-SVM}\label{sec: CCG}

Moving beyond the large $p$ and small $n$ setting, we first consider the case where $n$ is large but $p$ is small (Section~\ref{sec: problemConG}).
{For the L1-SVM Problem~\eqref{l1-SVM}, if the linear classifier 
is able to correctly classify most of the samples, the corresponding terms in the hinge loss i.e., $\xi_{i}$'s will be zero at an optimal solution, and a constraint generation approach will be useful\footnote{If the underlying dataset cannot be well classified via a linear SVM at the corresponding sparsity level of $\B\beta$, then the constraint generation procedure may not be effective.}.} 
Section~\ref{sec:CCG-L1} discusses the case where both $n,p$ are large where we use both column and constraint generation.

\subsubsection{Constraint generation for large $n$ and small $p$}\label{sec: problemConG}
Given $\lambda>0$ and  $\mathcal{I} \subset \left\{1,\ldots,n \right\}$,
we define the \emph{restricted constraints} version of the L1-SVM problem, by using a subset (indexed by $\mathcal{I}$) of the constraints\footnote{Note that in the objective in~\eqref{l1-SVM-restricted-dual}, the first term involves a summation over $\sum_{i \in \mathcal I} \xi_{i}$---this is because, if  a constraint is not present, the corresponding value of $\xi_{i}$ will be zero.} in Problem \eqref{l1-SVM-primal-main}

	\begin{equation}\label{l1-SVM-restricted-dual}
	\begin{myarray}[1.1]{c c c c r}
	\boxed{\mathcal{P}_{\lam}(\mathcal{I}, [p] )}~~~~~~~~~~~ & \min \limits_{ \substack{\B{\xi} \in \mathbb{R}^{| \mathcal{I}  | },  \beta_0  \in \mathbb{R} \\
			\  \B{\beta}^+, \ \B{\beta}^- \in \mathbb{R}^{p } } }  &  \sum \limits_{i \in \mathcal{I} } \xi_i  + \lambda \sum \limits_{j =1}^p \beta^+_j + \lambda \sum \limits_{j=1}^p \beta^-_j&   \\
	~~~~~~~&\sbt & \;\; \ \xi_i +  \sum \limits_{j=1}^p  y_i x_{ij} \beta_j^+ - \sum \limits_{j=1}^p  y_i x_{ij} \beta_j^-  + y_i \beta_0 \ge 1 & \;\;\; i \in  \mathcal{I}  \\
	~~~~~~~&& \;\;\;\; \ \B{\xi} \ge 0, \ \B{\beta}^+ \ge 0, \ \B{\beta}^- \ge 0.& 
	\end{myarray}
	\end{equation}
	A dual of Problem \eqref{l1-SVM-restricted-dual} is given by:
	\begin{equation}\label{l1-SVM-restricted-dual1}
	\begin{myarray}[1.1]{c c c c r}
	\boxed{\mathcal{D}_{\lam}(\mathcal{I}, [p] )}~~~~~~~~~~~ 
	& \max \limits_{\B{\pi} \in \mathbb{R}^{| \mathcal{I}  | } } & \sum \limits_{i\in\mathcal{I} } \pi_i  & \\
	&\sbt& \;\;\;\; \ -\lambda \le \sum \limits_{i \in \mathcal{I}} y_i x_{ij} \pi_i \le \lambda   & \;\;\; j \in[p] \\
	&& \;\;\;\; \  \sum \limits_{i \in \mathcal{I}} y_i \pi_i = 0 &\\
	&& \;\; 0\le \pi_i \le 1 & \;\;\; i \in \mathcal{I}.
	\end{myarray}
	\end{equation}

Let $(\B{\beta}^{\dagger}, \beta_0^{\dagger}) \in \mathbb{R}^{p+1}$ and  $\{{\pi}^{\dagger}_i\}_{i\in\cI} $ denote optimal solutions of Problem~\eqref{l1-SVM-restricted-dual} and~\eqref{l1-SVM-restricted-dual1} (respectively). 
Note that a constraint in~\eqref{line-1-l1svm-const} corresponding to $i \notin \mathcal{I}$ is violated if 
\begin{eqnarray}\label{pi-reduced cost}
\bar{\pi}_{i} = 1 -  y_i ( \mathbf{x}_i^T  \B{\beta}^{\dagger} + \beta_0^{\dagger} )
\end{eqnarray}
is (strictly) positive. 
We add to $\mathcal{I}$  all (or a subset of) indices corresponding to the dual variables having reduced cost higher than a threshold $\epsilon \geq 0$ (specified a-priori).
We solve the new LP formed with the expanded set $\mathcal I$. The constraint generation algorithm for L1-SVM is summarized below:

\clearpage 

\noindent \textsc{Algorithm~3}: \textbf{Constraint generation for L1-SVM}

\noindent \textbf{Input:} $\mathbf{X}$, $\mathbf{y}$, regularization parameter  $\lambda$, a tolerance threshold $\epsilon \geq 0$, an initial set of constraints indexed by $\mathcal{I}$.

\noindent \textbf{Output:} A near-optimal solution $\B{\beta}^{\dagger}$ for the L1-SVM Problem \eqref{l1-SVM}.
\begin{enumerate}
	\item Repeat Steps 2 to 3 until $\mathcal{I}$ stabilizes.
	
	\item Solve Problem {\color{black} $\mathcal{P}_{\lam}(\mathcal{I}, [p])$} (i.e., Problem \eqref{l1-SVM-restricted-dual}).
	
	\item Let $\mathcal{I}^{\epsilon} \subset \left\{1,\ldots,n \right\} \backslash \mathcal{I}$ denote constraints with violations larger than $\epsilon$. 
	Update  $\mathcal{I} \leftarrow \mathcal{I} \cup \mathcal{I}^{\epsilon}$, and go to Step~2 (with LP warm-starting enabled).
\end{enumerate}

The following theorem gives an upper bound on the resulting objective value upon termination of Algorithm~3. 
	\begin{thm}\label{thm-2-statement}
		Let $z^*$ denote the optimal objective value of $\cP_{\lam}([n], [p])$. Let $(\B{\beta}^{\dagger}, \beta_0^{\dagger}) \in \mathbb{R}^{p+1}$ denote an optimal solution of Problem~\eqref{l1-SVM-restricted-dual}; and 
		\begin{eqnarray}\label{z-dagger}
		z^{\dagger}  := \sum \limits_{i=1}^n \left( 1 - y_i( \mathbf{x}_i^T \B{\beta}^{\dagger} + \beta_0^{\dagger}   ) \right)_+  + \lambda \| \B{\beta}^{\dagger}  \|_1. 
		\end{eqnarray}
		At an iteration of Algorithm~3, let $\td{\ep} := \max_{i\in [n]\backslash \cI}\{ \max\{\bar \pi_i,0\} \}$ with $\bar \pi_i$, $i\in [n]\backslash \cI$, defined in \eqref{pi-reduced cost}. Then it holds
		\begin{eqnarray}
		 z^* \le z^\dagger \le z^* + \td{\ep} (n-|\cI|).
		\end{eqnarray}
	\end{thm}
\begin{proof}
	The first inequality is trivial. For the second inequality, note that from \eqref{z-dagger} we have
	\begin{eqnarray}\label{ineq1}
	z^{\dagger} 
	=
	\sum_{i\in[n] \backslash \cI} \max(\bar \pi_i , 0) + \sum_{i\in  \cI} \left( 1 - y_i( \mathbf{x}_i^T \B{\beta}^{\dagger} + \beta_0^{\dagger}   ) \right)_+   + \lam \| \B{\beta}^{\dagger} \|_1 . 
	\end{eqnarray}
	By the definition of $\td\ep$ we know that 
	\begin{eqnarray}\label{ineq2}
	\sum_{i\in[n] \backslash \cI} \max(\bar \pi_i , 0) \le (n-|\cI|)\td\ep.
	\end{eqnarray}
Since $  \sum_{i\in  \cI} ( 1 - y_i( \mathbf{x}_i^T \B{\beta}^{\dagger} + \beta_0^{\dagger}   ) )_+  + \lam \| \B{\beta}^{\dagger} \|_1$ is the optimal value of $\cP_{\lam} (\cI, [p])$, and because $\cP_{\lam} (\cI, [p])$ is a relaxation of $\cP_{\lam} ([n], [p])$, we have 
\begin{eqnarray}\label{ineq3}
\sum_{i\in  \cI} \left( 1 - y_i( \mathbf{x}_i^T \B{\beta}^{\dagger} + \beta_0^{\dagger}   ) \right)_+   + \lam \| \B{\beta}^{\dagger} \|_1 \le z^*.
\end{eqnarray}
Combining \eqref{ineq1}, \eqref{ineq2} and \eqref{ineq3} we obtain the upper bound on $z^\dagger$.
\end{proof}
Note that the value $\td \ep \geq 0$ satisfies $\td \ep \le \ep$ where $\ep$ is the value of tolerance we set in Algorithm 3.
When $\td\ep=0$, we are at an optimal solution.

\paragraph{Initialization:} Similar to the case in column generation, the constraint generation procedure benefits from a
 good initialization scheme. To this end, we use first order methods as described in Section \ref{sec: FOM}---specifically, the method we use for large $n$ (and small $p$), is discussed in Section~\ref{sec: heuristic-aggregation}.

\subsubsection{Column and constraint generation when both $n$ and $p$ are large}\label{sec:CCG-L1}

When both $n$ and $p$ are large, we will use a combination of column and constraint generation to solve the L1-SVM problem. For a given $\lambda$, let 
$\mathcal{I}$ and $\mathcal{J}$ denote subsets of columns and constraints (respectively).
This leads to the following restricted version of the L1-SVM problem:

	\begin{equation}\label{l1-SVM-doubly-restricted-primal}
	\begin{myarray}[1.1]{l c c r}
	\boxed{\cP_{\lam}(\mathcal{I}, \mathcal{J})}~~ & \min \limits_{ \substack{\B{\xi} \in \mathbb{R}^{| \mathcal{I}  | },\ \beta_0  \in \mathbb{R} \\ \  \B{\beta}^+, \ \B{\beta}^- \in \mathbb{R}^{| \mathcal{J}| }  } }  &  \sum \limits_{i \in \mathcal{I} } \xi_i  + \lambda \sum \limits_{j \in \mathcal{J} } \beta^+_j + \lambda \sum \limits_{j \in \mathcal{J}} \beta^-_j& \\
	&\sbt & \;\; \ \xi_i +  \sum \limits_{j \in \mathcal{J} }  y_i x_{ij} \beta_j^+ - \sum \limits_{j \in \mathcal{J} }  y_i x_{ij} \beta_j^-  + y_i \beta_0 \ge 1 & \;\;\;  i \in  \mathcal{I}  \\
	&& \ \B{\xi} \ge 0, \ \B{\beta}^+ \ge 0, \ \B{\beta}^- \ge 0.  &
	\end{myarray}
	\end{equation}
	
	The dual problem of $\cP_\lam(\cI,\cJ)$ is 
	\begin{equation}\label{l1-SVM-doubly-restricted-dual}
	\begin{myarray}[1.1]{c c c c r}
	\boxed{\cD_{\lam}(\cI, \cJ)}:\;\;\;\;& \max \limits_{\B{\pi} \in \mathbb{R}^{|\cI| } } & \centermathcell{ \sum \limits_{i\in \cI} \pi_i }   \\
	&\;\;\; \sbt  & \centermathcell{ \;\;\; -\lambda \le  \sum \limits_{i\in \cI} y_i x_{ij} \pi_i \le \lambda \;\;\; }   & \;\;\;\;\; j \in \cJ \\ 
	& & \centermathcell{  \sum_{i\in \cI} y_i \pi_i = 0 }  \\
	& &  \centermathcell{ 0\le \pi_i \le 1 } & \;\;\;\;\; i \in \cI. 
	\end{myarray}
	\end{equation}

Let $(\{\hat{{\beta}}^{\dagger}_j\}_{j\in \cJ}, \hat\beta_0^{\dagger}),~ \{\hat{\pi}^{\dagger}_i\}_{i\in \cI}$ be a pair of optimal primal and dual solutions for the above problem. 
Let $\bar{\beta}^+_j, \bar{\beta}^-_j$ denote the reduced costs for primal variables $\beta^+_j, \beta^-_j$; and $\bar{\pi}_{i}$ denote the reduced cost for dual variable $\pi_{i}$. 
The reduced costs are given by:
\begin{equation} \label{reduced-costs}
\min \left\{ \bar{\beta}^{+}_j,  \bar{\beta}^{-}_j \right\} = \lambda - \left|   \sum \limits_{i\in\mathcal{I}}  y_i x_{ij} \hat\pi_i^{\dagger}  \right| ~\text{for}~j\notin \cJ; \ \  \ \ \bar{\pi}_i = 1 - y_i \left( \sum \limits_{j \in \mathcal{J}} x_{ij} \hat\beta_j^{\dagger}  + \hat\beta_0^{\dagger}   \right)~ \text{for}~i\notin \cI.
\end{equation}
We expand the sets ${\mathcal I}$ and $\mathcal J$ by using Steps 3 and 4 of Algorithm~4 (below). 
We then solve Problem \eqref{l1-SVM-doubly-restricted-primal} (with warm-starting enabled) and continue till ${\mathcal I}$ and $\mathcal J$ stabilize.
Section \ref{sec: FOM-CCG} discusses the use of first order optimization methods to initialize $\mathcal{I}$ and $\mathcal{J}$. 
Our hybrid column and constraint generation approach to solve the L1-SVM Problem \eqref{l1-SVM} is summarized below.

\medskip
\medskip
\medskip

\noindent \textsc{Algorithm~4}: \textbf{Combined column and constraint generation for L1-SVM} \newline
\textbf{Input:} $\mathbf{X}$, $\mathbf{y}$, a regularization coefficient  $\lambda$, tolerance thresholds $\epsilon_1, \epsilon_2 \geq 0$, initial subsets $\mathcal I$ and $\mathcal J$. \newline
\textbf{Output: }A near-optimal solution $\hat{\B{\beta}}^{\dagger}$ for the L1-SVM Problem \eqref{l1-SVM}.
\begin{enumerate}
	\item 
	Repeat Steps 2 to 4 until $\mathcal{I}$  and $\mathcal{J}$ stabilize.
	
	\item Solve $\mathcal{P}_{\lam}(\mathcal{I}, \mathcal{J})$ that is,  
	Problem~\eqref{l1-SVM-doubly-restricted-primal}.
	
	\item Let $\mathcal{I}^{\epsilon_1} \subset \left\{1,\ldots,n \right\} \backslash \mathcal{I}$ denote constraints with reduced cost higher than $\epsilon_1$.
	Update  $\mathcal{I} \leftarrow \mathcal{I} \cup \mathcal{I}^{\epsilon_1}$.
	
	\item Let $\mathcal{J}^{\epsilon_2} \subset \left\{1,\ldots,p \right\} \backslash \mathcal{J}$ denote columns with reduced cost lower than $-\epsilon_2$.  
	Update  $\mathcal{J} \leftarrow \mathcal{J} \cup \mathcal{J}^{\epsilon_2}$; and go to Step 2.
\end{enumerate}

The following theorem gives an optimality certificate of a solution obtained from Algorithm~4. 
	\begin{thm}
		Let $z^*$ denote the optimal objective value of $\cP_{\lam}([n], [p])$. 
		Recall that $(\{\hat{{\beta}}^{\dagger}_j\}_{j\in \cJ}, \hat\beta_0^{\dagger})$ is an optimal solution of Problem $\cP_{\lam} (\cI, \cJ)$. 
		Extend $\{\hat{{\beta}}^{\dagger}_j\}_{j\in \cJ}$ into a vector $\hat{\B{\beta}}^{\dagger} \in \R^p$ with $\hat \beta_j^{\dagger} = 0$ for all $j\in [p]\backslash\cJ$. 
		Denote
		\begin{eqnarray}
		\hat z^{\dagger}  := \sum \limits_{i=1}^n \left( 1 - y_i( \mathbf{x}_i^T \hat{\B{\beta}}^{\dagger} + \hat\beta_0^{\dagger}   ) \right)_+  + \lambda \| {\hat{\B{\beta}}}^{\dagger}  \|_1. 
		\end{eqnarray}
		At an iteration of Algorithm~4, let 
		\begin{eqnarray}
		\td\ep_1 := \max_{i\in [n]\backslash \cI}\{\max(\bar \pi_i , 0)\}, ~~~ \td\ep_2 := - \min_{j\in [p] \setminus \cJ} \lt\{ \min \{\bar \beta_j^+ ,  \bar \beta_j^- ,0  \}  \rt\}
		\end{eqnarray}
		where $ \min \{ \bar{\beta}^{+}_j,  \bar{\beta}^{-}_j \}$ and $\bar \pi_i$ are defined in \eqref{reduced-costs}.
If $\B\beta^*$ is an optimal solution to $\cP_{\lam}([n], [p])$, it holds
		\begin{equation*}
		z^* \le \hat z^{\dagger} \le z^* + \td\ep_1 (n-|\cI|) + \td\ep_2 \| \B{\beta}^*\|_1.
				\end{equation*}
	\end{thm}
\begin{proof}
	The first inequality is trivial. Below we prove the second inequality. For any $i\in [n] \backslash \cI$ let $ \hat \pi_i^\dagger =0$. Then from the definition of $\td\ep_2$ we have 
	\begin{eqnarray}
	\lam - \left|   \sum \limits_{i=1}^n  y_i x_{ij} \hat\pi_i^{\dagger}  \right| =
	\lam - \left|   \sum \limits_{i\in\mathcal{I}}  y_i x_{ij} \hat\pi_i^{\dagger}  \right|  \ge -\td\ep_2
	 ~~~~ \forall j\in [p] \setminus \cJ. 
	\end{eqnarray}
	On the other hand, since 
	\begin{eqnarray}
	\lam - \left|   \sum \limits_{i=1}^n  y_i x_{ij} \hat\pi_i^{\dagger}  \right| =
	\lam - \left|   \sum \limits_{i\in\mathcal{I}}  y_i x_{ij} \hat\pi_i^{\dagger}  \right|  \ge 0~~~~ \forall j\in \cJ,
	\end{eqnarray}
	we have 
	\begin{eqnarray}
	-\lam - \td\ep_2 \le  \sum \limits_{i=1}^n  y_i x_{ij} \hat\pi_i^{\dagger} \le \lam + \td\ep_2  ~~~~ \forall j\in [p]. 
	\end{eqnarray}
	In addition, we have $ \sum_{i=1}^n y_i \hat\pi_i^\dagger =  \sum_{i\in \cI} y_i \hat\pi_i^\dagger =0$ and $0\le  \hat\pi_i^\dagger  \le 1$ for all $i\in [n]$. So $\{\hat \pi_i^\dagger\}_{i=1}^n$ is a solution of $\cD_{\lam+\td\ep_2} ([n], [p])$. Let $ (\B{\xi}^*, \B{\beta}^{*+}, \B{\beta}^{*-}, \beta^*_0) $ be an optimal solution of $\cP_{\lam} ([n], [p])$, then it is also a feasible solution of $\cP_{\lam+\td\ep_2} ([n], [p])$. Hence by weak duality we have 
	\begin{eqnarray}\label{ineq-1}
	\sum_{i=1}^n \hat\pi_i^\dagger \le \sum_{i=1}^n \xi_i^* +(\lam+\td\ep_2 ) \sum_{j=1}^p (\beta_j^{*+} + \beta_j^{*-}) = z^* + \td\ep_2 \| \B{\beta}^* \|_1. 
	\end{eqnarray}
		On the other hand, from the definition of $\hat z^\dagger$ we have 
	\begin{eqnarray}\label{ineq-2}
	\hat z^\dagger &=& \sum \limits_{i\in \cI } \left( 1 - y_i( \mathbf{x}_i^T \hat{\B{\beta}}^{\dagger} + \hat \beta_0^{\dagger}   ) \right)_+  + \sum_{i\in [n]\backslash \cI }  \max(\bar \pi_i , 0) + \lambda \|  \hat{\B{\beta}}^{\dagger}  \|_1 \nonumber\\
	&\le&
	 \sum \limits_{i\in \cI } \left( 1 - y_i( \mathbf{x}_i^T \hat{\B{\beta}}^{\dagger} + \hat \beta_0^{\dagger}   ) \right)_+  + \td\ep_1 (n - |\cI|)  +  \lambda \|  \hat{\B{\beta}}^{\dagger}  \|_1 \nonumber\\
	&=&
		 \sum \limits_{i\in \cI } \left( 1 - y_i( \mathbf{x}_i^T \hat{\B{\beta}}^{\dagger} + \hat \beta_0^{\dagger}   ) \right)_+  + \td\ep_1 (n - |\cI|) +\lam 
		\sum_{j\in \cJ} |\hat \beta_j^\dagger|
	\end{eqnarray}
	where in the inequality above, we use the definition of $\tilde{\epsilon}_{2}$.\\
	Since $\sum \limits_{i\in \cI } \left( 1 - y_i( \mathbf{x}_i^T \hat{\B{\beta}}^{\dagger} + \hat \beta_0^{\dagger}   ) \right)_+  + \lam 
	\sum_{j\in \cJ} |\hat \beta_j^\dagger|$ is the optimal objective value of $ \cP_\lam (\cI, \cJ) $, it holds (by strong duality)
	\begin{eqnarray}\label{ineq-3}
	\sum \limits_{i\in \cI } \left( 1 - y_i( \mathbf{x}_i^T \hat{\B{\beta}}^{\dagger} + \hat \beta_0^{\dagger}   ) \right)_+  + \lam 
	\sum_{j\in \cJ} |\hat \beta_j^\dagger| ~=~
	\sum_{i\in \cI} \hat\pi_i^\dagger 
	= 	\sum_{i=1}^n \hat\pi_i^\dagger .
	\end{eqnarray}
	Combining \eqref{ineq-1}, \eqref{ineq-2} and \eqref{ineq-3} we complete the proof. 
\end{proof}
Note that the nonnegative values $\td \ep_1$ and $\td \ep_2$ satisfy $\td \ep_1  \le \ep_1$ and $\td \ep_2  \le \ep_2$  where $\ep_1$ and $\ep_2$ are tolerances we set in Algorithm 4.
When $\td\ep_1 = \td\ep_2=0$, we are at an optimal solution.

\subsection{Application to the Group-SVM problem}\label{sec: group-CG}

We now discuss application of column/constraint generation to the Group-SVM Problem~\eqref{group-SVM-intro}.
We first introduce some notation that we will use---we let $\mathcal{I}_g\subset[p]$ denote the indices corresponding to group $g$ for $g \in[G]$.

\smallskip 

\noindent {\bf Column generation:} Below we present an LP formulation for Problem \eqref{group-SVM-intro}. We introduce the variables $\B{v}=(v_{g})_{g \in [G]}$ such that $v_g$ refers to the $L_{\infty}$-norm of the coefficients $\B\beta_{g}$:
\begin{equation} \label{group-SVM-primal}
\begin{myarray}[1.1]{l c c l}
\text{(Group-SVM)}& \min \limits_{ \substack{\B{\xi} \in \mathbb{R}^n,  \beta_0 \in \mathbb{R},\\  \B{\beta}^+,\  \B{\beta}^- \in \mathbb{R}^p, \B{v} \in \mathbb{R}^G }   } &  \sum \limits_{i=1}^n \xi_i  + \lambda \sum \limits_{g=1}^G v_g &\\
&\sbt & \;\; \ \xi_i + y_i \mathbf{x}_i^T \B{\beta}^+ - y_i \mathbf{x}_i^T \B{\beta}^-  + y_i \beta_0 \ge 1 & \;\;\; i \in [n] \\
&& \;\; \ v_g - \beta_j^+ - \beta_j^- \ge 0& j \in \mathcal{I}_g, \ g \in [G]   \\
&& \;\;\;\; \ \B{\xi} \ge 0, \ \B{\beta}^+ \ge 0, \ \B{\beta}^- \ge 0, \  \B{v}  \ge 0. &\\
\end{myarray}
\end{equation}
A dual of Problem \eqref{group-SVM-primal} is given by:
\begin{equation}\label{group-SVM-dual}
\begin{myarray}[1.1]{l c c c}
\text{(Dual-Group-SVM)}~~& \max \limits_{\B{\pi} \in \mathbb{R}^{n } } & \sum \limits_{i=1}^n \pi_i  & \\
& \sbt  & \;\;\;\; \ \sum \limits_{ j \in \mathcal{I}_g} \left| \sum \limits_{i=1}^n  y_i x_{ij} \pi_i \right| \le \lambda   & \;\;\; g \in [G] \\
& & \  \mathbf{y}^T \B{\pi} = 0 &\\
& &  0\le \pi_i \le 1 & i \in [n].
\end{myarray}
\end{equation}
Following the description in Section~\ref{sec: CG-LP}, we apply column generation on the groups.
Here, the reduced cost  of group $g$ is given as:
\begin{equation} \label{reduced-costs-group}
\bar{{\beta}_g} = \lambda - \sum_{j \in \mathcal{I}_g } \left|   \sum \limits_{i=1}^n  y_i x_{ij} \pi_i  \right|
\end{equation}
and we include into the model (a subset of) groups $g$ for which 
$\bar{{\beta}_g} <0$ (for a small negative tolerance).

\paragraph{Computing a regularization path: } 
The regularization path algorithm presented in Section~\ref{sec: correlation}
can be adapted to the Group-SVM problem.  First, note that:
\begin{equation}\label{lam-max-defn-grp}
\B{\beta}^*(\lambda) = \mathbf{0}, \;\;\;\; \forall \lambda \ge \lambda_{\max} = \max_{g \in [G]} \sum_{j \in \mathcal{I}_g} \sum_{i=1}^n |x_{ij}|.
\end{equation}
For $\lambda = \lambda_{\max}$, the reduced cost of variables corresponding to group $g$ is given by the ``group'' analogue of~\eqref{reduced-cost-primal}:
\begin{equation}\label{reduced-costs-groups}
\bar{{\beta}_g}  = \lambda_{\max} - \sum_{j \in \mathcal{I}_g} \left|  \frac{N_-}{N_+} \sum \limits_{i \in  \mathcal{I}_+} y_i x_{ij} + \sum \limits_{i \in  \mathcal{I}_-} y_i x_{ij} \right|.
\end{equation}
As in Section~\ref{sec: correlation}, we can obtain a small set of groups maximizing the rhs of~\eqref{reduced-costs-groups}. We use these groups to initialize 
the LP solver to solve Problem~\eqref{group-SVM-primal} for the next small value of $\lambda$, using column generation---this results in computational savings when the number of active groups is small compared to $G$.
We repeat this process for smaller values of $\lambda$ using warm-start continuation.

\smallskip 

\noindent {\bf Constraint generation and column generation: } When $n$ is large (but the number of groups is small) constraint generation can be used for 
the Group-SVM problem in a manner similar to that used for the L1-SVM problem. Similarly, column and constraint generation can be applied together to obtain computational savings when both $n$ and the number of groups are large. 

\smallskip 

\noindent {\bf Initialization:} As discussed for the L1-SVM problem, we can use first order methods to obtain a low-accuracy solution for the Group-SVM problem---these are discussed in Section~\ref{sec: FOM}. 
This can be used to initialize the set of nonzero groups (for column generation), relevant constraints (for constraint generation), and both groups and constraints (for the combined column-and-constraint generation procedure).

\section{Column and constraint generation for Slope-SVM}\label{sec: group-L1-SVM}
Here we discuss the Slope-SVM estimator i.e., Problem~\eqref{Slope-intro}.
For a $p$-dimensional regularization parameter $\B\lambda$ with coordinates sorted as: $\lambda_1 \ge \ldots \ge \lambda_p \ge0$, we let
\begin{equation} \label{slope-norm}
\| \B{\beta} \|_S := \sum_{j=1}^p \lambda_j | \beta_{(j)} |
\end{equation}
denote the Sorted L1-norm or the \emph{Slope-norm}---we borrow this term inspired by the ``Slope estimator''~\cite{slope-introduction} and acknowledge a slight abuse in terminology. Note that for convenience, we drop the dependence on $\B\lambda$ in the notation $\| \cdot \|_{S}$.

We note that the epigraph of $\| \B{\beta} \|_S$ i.e., $\{(\B\beta, \eta)~ |~ \| \B\beta \|_{S} \leq \eta\}$ admits an LP formulation using $O(p^2)$ many variables and $O(p^2)$ many constraints (cf Section~\ref{epi-slope-1})---given the problem-sizes we seek to address, we do not 
pursue this route. Rather, we make use of the observation that the epigraph 
can be expressed with exponentially many linear inequalities (Section~\ref{con-gen-slope}) when using 
$O(p)$-many variables. 
The large  number of constraints make column/constraint generation methodology for the Slope penalty significantly different than the L1-SVM example. However, as we will discuss, our procedure does \emph{not} require us to explicitly enumerate the exponentially many inequalities. 
To our knowledge, the procedure we present here for the Slope-SVM problem is novel. Section~\ref{con-gen-slope} discusses a constraint generation method that greatly reduces the number of constraints needed
to model the epigraph. Section~\ref{sec:slope-columns}  discusses the use of column generation to 
exploit 
sparsity in $\B\beta$ when $p$ is large. Finally, 
Section \ref{sec:slope-CCG} combines these two features to address the Slope SVM problem.
We note that even for large $p$ and small $n$ settings, both column and constraint generation methods are needed for the Slope penalty, making it different 
from the L1-penalty, where column generation (alone) suffices. 
In what follows, we concentrate on the case where $n$ is small but $p$ is large --- if $n$ is also large, a further layer of constraint generation might be needed to efficiently handle sparsity arising from the hinge-loss.

\subsection{Constraint generation for Slope-SVM}\label{con-gen-slope}

\noindent {\bf Reformulation of Slope-SVM:}
Note that Problem \eqref{Slope-intro} can be expressed as: 
\begin{subequations}\label{slope-primal}
	\begin{alignat}{6}
	\boxed{\mathcal{M}_{S}(\mathcal{C}, [p])} ~~~~~~~ & \min \limits_{ \substack{ \B{\xi} \in \mathbb{R}^n,  \ \beta_0,  \eta \in \mathbb{R}, \\  \  \B{\beta}^+,\  \B{\beta}^- \in \mathbb{R}^p }  } &  \centermathcell{\sum \limits_{i=1}^n \xi_i  + \eta} & \nonumber \\
	& \sbt\; &  \centermathcell{  \xi_i + y_i \mathbf{x}_i^T \B{\beta}^+ - y_i \mathbf{x}_i^T \B{\beta}^-  + y_i \beta_0 \ge 1 }& \;\;\;\; i \in [n]  \label{slope-primal-eq0} \\
	&& \centermathcell{ (\B{\beta}^+,\  \B{\beta}^- , \ \eta) \in \mathcal{C} } & \label{slope-primal-eq} \\
	&& \centermathcell{  \B{\xi} \ge 0, \ \B{\beta}^+ \ge 0, \ \B{\beta}^- \ge 0} & \nonumber
	\end{alignat}
\end{subequations}
where, $\B{\beta} = \B{\beta}^+ - \B{\beta}^-$; and  $\B{\beta}^+, \ \B{\beta}^- \in \R^p_+$ denote the positive and negative parts of $\B\beta$ (respectively). 
In line~\eqref{slope-primal-eq} we express the Slope penalty in the epigraph form with $\mathcal{C}$ defined as:
$$ \mathcal{C} := \left\{  (\B{\beta}^+,\  \B{\beta}^- , \ \eta) \ \biggr\rvert \ \eta \ge \sum \limits_{j=1}^p \lambda_j \beta^+_{(j)} + \sum \limits_{j=1}^p \lambda_j \beta^-_{(j)}, ~~ \B{\beta}^+,\  \B{\beta}^-\in\mathbb{R}^p_+  \right\}$$
where, we use the notation $\beta^+_{(1)} + \beta^-_{(1)} \ge \ldots \ge \beta^+_{(p)} + \beta^-_{(p)}$ and remind ourselves that $|\beta_{i}| = \beta_{i}^+ + \beta_{i}^{-}$ for all $i$.
Below we show that~\eqref{slope-primal-eq} can be expressed via linear inequalities involving $(\B\beta^+,\B\beta^{-})$.

We first introduce some notation. Let $\mathcal{S}_p$ denote the set of all permutations of $\left\{ 1, \ldots, p \right\}$, with $|\mathcal{S}_p| = p!$. For a permutation $\phi \in {\mathcal S}_{p}$, we let 
$(\phi(1), \ldots, \phi(p))$ denote the corresponding rearrangement of $(1, \ldots, p)$. Using this notation, 
the Slope norm can be expressed as:
\begin{equation} \label{slope-reformulation}
\| \B{\beta} \|_S = \sum_{j=1}^p \lambda_j | \beta_{(j)} | = \max \limits_{\phi \in \mathcal{S}_p } \sum_{j=1}^p  \lambda_j   | \beta_{\phi(j)} | = \max \limits_{\psi \in \mathcal{S}_p } \sum_{j=1}^p  \lambda_{\psi(j)}  | \beta_{j} |.
\end{equation}
As a consequence, we have the following lemma:
\begin{lemma}\label{slope-lemma}
	The Slope norm $\| \B\beta \|_S$ admits the following representation
		$$\| \B{\beta} \|_S =   \max \limits_{  \B{w} \in \mathcal{W}^{[p]} } \B{w}^T  (\B{\beta}^+ + \B{\beta}^-) = \max_{\B{w} \in \mathcal{W}_0^{[p]}  }  \B{w}^T  (\B{\beta}^+ + \B{\beta}^-)$$ 
	where, $\mathcal{W}_0^{[p]} :=  \Conv \left( \mathcal{W}^{[p]}   \right)$ is the convex hull of $\mathcal{W}^{[p]}$, where
	\begin{equation}\begin{aligned}
	\label{eq-wp}
 \mathcal{W}^{[p]} := \left\{\B{w} \in \mathbb{R}^p  ~ \rvert \  \exists \psi \in \mathcal{S}_p ~~\text{s.t.}~~ \  w_j = \lambda_{\psi(j)}, j \in [p] \right\}.
	\end{aligned} \end{equation}
\end{lemma}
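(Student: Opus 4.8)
The plan is to establish the two claimed equalities in turn, using the reformulation \eqref{slope-reformulation} as the starting point. First I would observe that $|\beta_j| = \beta_j^+ + \beta_j^-$ for all $j$ (as already noted in the excerpt), so the rightmost expression in \eqref{slope-reformulation} becomes $\max_{\psi \in \mathcal{S}_p} \sum_{j=1}^p \lambda_{\psi(j)} (\beta_j^+ + \beta_j^-)$. Now for a fixed permutation $\psi$, define $\B{w}^{(\psi)} \in \mathbb{R}^p$ by $w_j^{(\psi)} = \lambda_{\psi(j)}$; these are exactly the elements of $\mathcal{W}^{[p]}$ by \eqref{eq-wp}. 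Then $\sum_{j=1}^p \lambda_{\psi(j)} (\beta_j^+ + \beta_j^-) = (\B{w}^{(\psi)})^T (\B{\beta}^+ + \B{\beta}^-)$, so taking the max over $\psi \in \mathcal{S}_p$ is the same as taking the max over $\B{w} \in \mathcal{W}^{[p]}$ (the map $\psi \mapsto \B{w}^{(\psi)}$ is onto $\mathcal{W}^{[p]}$ by definition). This gives the first equality $\| \B{\beta} \|_S = \max_{\B{w} \in \mathcal{W}^{[p]}} \B{w}^T (\B{\beta}^+ + \B{\beta}^-)$.

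For the second equality, I would invoke the elementary fact that maximizing a linear functional over a finite set $\mathcal{W}^{[p]}$ gives the same value as maximizing it over the convex hull $\mathcal{W}_0^{[p]} = \Conv(\mathcal{W}^{[p]})$: for any $\B{z}$, $\max_{\B{w} \in \Conv(\mathcal{W}^{[p]})} \B{w}^T \B{z}$ is attained at an extreme point, and every extreme point of $\Conv(\mathcal{W}^{[p]})$ lies in $\mathcal{W}^{[p]}$; conversely $\mathcal{W}^{[p]} \subseteq \mathcal{W}_0^{[p]}$ so the max over the hull is at least the max over the finite set. Applying this with $\B{z} = \B{\beta}^+ + \B{\beta}^-$ yields $\max_{\B{w} \in \mathcal{W}^{[p]}} \B{w}^T (\B{\beta}^+ + \B{\beta}^-) = \max_{\B{w} \in \mathcal{W}_0^{[p]}} \B{w}^T (\B{\beta}^+ + \B{\beta}^-)$, completing the proof.

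I do not anticipate a genuine obstacle here — the statement is essentially a bookkeeping/definition-chasing exercise once \eqref{slope-reformulation} is granted. The one point requiring a moment's care is the middle equality in \eqref{slope-reformulation} itself, i.e. that $\sum_j \lambda_j |\beta_{(j)}| = \max_{\phi} \sum_j \lambda_j |\beta_{\phi(j)}|$: this is the rearrangement inequality, stating that the sorted-decreasing pairing of the $\lambda_j$ (already decreasing) with the $|\beta_j|$ maximizes the inner product over all pairings, so the max over permutations is attained precisely by the sorting permutation. If that step is taken as given (as the excerpt appears to), then the remaining work is just the substitution $|\beta_j| = \beta_j^+ + \beta_j^-$ and the finite-set/convex-hull identity, both of which are routine.
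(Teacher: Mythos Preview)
Your proposal is correct and follows essentially the same approach as the paper: both arguments combine the identification of $\mathcal{W}^{[p]}$ with the permutation reformulation~\eqref{slope-reformulation} (using $|\beta_j|=\beta_j^++\beta_j^-$) and the extreme-point fact that a linear functional attains the same maximum over a finite set and its convex hull. The only cosmetic difference is the order in which the two equalities are established.
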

\begin{proof}
Note that a linear function maximized over a bounded polyhedron reaches its maximum at one of the extreme points of the polyhedron --- this leads to:
	\begin{equation}\label{proof-slope-form-1}
	\max \limits_{\B{w} \in \mathcal{W}_0^{[p]}  }  \B{w}^T  (\B{\beta}^+ + \B{\beta}^-)
	= \max \limits_{  \B{w} \in \mathcal{W}^{[p]} } \B{w}^T  (\B{\beta}^+ + \B{\beta}^-).
\end{equation}
	 Using the definition of $\mathcal{W}^{[p]}$, we get that the rhs of~\eqref{proof-slope-form-1} is 
	 $\max \nolimits_{\psi \in \mathcal{S}_p } \sum \nolimits_{j=1}^p  \lambda_{\psi(j)}  | \beta_{j} |$ which is in fact the Slope norm $\| \B{\beta} \|_S.$
\end{proof}

The following remark provides a description of $\mathcal{W}^{[p]}$ for special choices of $\B\lambda$.
\begin{remark} 
 \textbf{(a)} If all the coefficients are equal i.e., $\lambda_1=\ldots=\lambda_p$ and 
$\|\B\beta\|_{S} = \lambda \| \B\beta \|_{1}$, then $\mathcal{W}^{[p]}$ is a singleton. 
\textbf{(b)} 
If all the coefficients are distinct i.e., $\lambda_1 > 
\ldots > \lambda_p$, then each permutation $\psi \in \mathcal{S}_p$ is associated with a unique vector in $\mathcal{W}^{[p]}$ and $\mathcal{W}^{[p]}$ contains $p!$ elements.
\end{remark}
Using  Lemma \ref{slope-lemma}, we can derive an LP formulation of Problem \eqref{slope-primal} by modeling 
$\mathcal{C}$ in~\eqref{slope-primal-eq} as: 
\begin{equation}\label{C-definition}
\mathcal{C} = \left\{  \left(\B{\beta}^+,\  \B{\beta}^- , \ \eta \right) ~\bigg\rvert~ \B{\beta}^+, \B{\beta}^- \in \mathbb{R}^p,~~  \eta \ge \max \limits_{  \B{w} \in \mathcal{W}^{[p]} } \B{w}^T  (\B{\beta}^+ + \B{\beta}^-) \right\},
\end{equation}
where, $\mathcal{W}^{[p]}$ is defined in~\eqref{eq-wp}.
The resulting LP formulation~\eqref{slope-primal} has at most $n$ constraints from~\eqref{slope-primal-eq0} and at most $p!$ constraints associated with~\eqref{slope-primal-eq} (by virtue of~\eqref{C-definition}). 
We note that many constraints in~\eqref{C-definition} are redundant: for example, the maximum is attained corresponding to the inverse of permutation $\phi$ (denoted by $\phi^{-1}$), where $| \beta_{\phi(1)} | \ge \ldots \ge | \beta_{ \phi(p) } | $. This motivates the use of constraint generation techniques.

\smallskip

\noindent {\bf Constraint generation:} We proceed by replacing $\mathcal{W}^{[p]}$ with a smaller subset 
and solve the resulting LP. 
We subsequently refine this approximation 
 if~\eqref{slope-primal-eq} is violated.
Formally, let us consider a collection of vectors/cuts $\M{w}^{(1)}, \ldots, \M{w}^{(t)} \in \mathcal{W}^{[p]}$
leading to a superset $\mathcal{C}_t$ of $\mathcal{C}$:
\begin{equation}\label{slope-primal-restricted}
\mathcal{C} \subseteq \mathcal{C}_t := \left\{  (\B{\beta}^+,\  \B{\beta}^- , \ \eta) \ \biggr\rvert ~ \B{\beta}^+, \B{\beta}^- \in \mathbb{R}^p,~~  \eta \ge \sum \limits_{j=1}^p w^{(\ell)}_j \beta^+_{j} + \sum \limits_{j=1}^p w^{(\ell)}_j \beta^-_{j}, \ \forall \ell \le t \right\}. 
\end{equation} 
By replacing $\mathcal{C}$ in~\eqref{slope-primal-eq} by $\mathcal{C}_t$, we
get an LP denoted by $\mathcal{M}_{S}(\mathcal{C}_t, [p])$ which is a relaxation of $\mathcal{M}_{S}(\mathcal{C}, [p])$.
Let $\left(\B{\beta}^{*}, \eta^*\right)$ be a solution of $\mathcal{M}_{S}(\mathcal{C}_t, [p])$. If this is not an optimal solution (at a tolerance threshold $\epsilon \geq 0$),
we add a cut to $\mathcal{C}_t$ if 
$$\eta^* + \epsilon < \sum \limits_{j=1}^p \lambda_j | \beta^*_{(j)} |  = \| \B\beta^*\|_{S}.$$ 
To this end, consider a permutation $\psi_{t+1} \in \mathcal{S}_p$ such that  $|\beta^{*}_{\psi_{t+1}(1)}| \ge \ldots \ge |\beta^{*}_{\psi_{t+1}(p)}| $. If  $\psi_{t+1}^{-1}$ denotes the inverse of $\psi_{t+1}$, we obtain $\M{w}^{(t+1)} \in \mathcal{W}^{[p]}$ such that:
\begin{equation}\label{adding-cut-slope}
{w}^{(t+1)}_{ j } = \lambda_{\psi^{-1}_{t+1}(j)} \ \forall j \in [p]
\end{equation}
and solve the resulting LP. 
We continue adding cuts, till no further cuts need to be added --- this leads to a (near)-optimal solution to $\mathcal{M}_{S}(\mathcal{C}, [p])$.
We note that the first cut $\M{w}^{(1)}$ can be obtained by applying~\eqref{adding-cut-slope} on an estimator obtained from the first order optimization schemes (cf Section \ref{sec: FOM}). 
Our algorithm is summarized below for convenience.

\medskip
\medskip
\medskip

\noindent \textsc{Algorithm~5:}  \textbf{Constraint generation for Slope-SVM} \newline
\textbf{Input:} $\mathbf{X}$, $\mathbf{y}$, a vector of Slope coefficients $\left\{ \lambda_j \right\}_{j \in [p]}$, a tolerance threshold $\epsilon>0$, a cut $\M{w}^{(1)}\in \mathcal{W}^{[p]}$. 

\noindent \textbf{Output: }A near-optimal solution $\B{\beta}^*$ for the Slope-SVM Problem \eqref{Slope-intro}.
\begin{enumerate}
	\item 
	Repeat Steps 2 to 3 (for $t \geq 1$) till no further cuts need to be added.
	
	\item Solve the model $\mathcal{M}_{S}(\mathcal{C}_{t},[p])$ with $\mathcal{C}_{t}$ as in~\eqref{slope-primal-restricted}. Let $\left(\B{\beta}^{*}, \eta^* \right)$ be a solution.
	
	\item Let $\psi_{t+1} \in \mathcal{S}_p$ be such that $|\beta^{*}_{\psi_{t+1}(1)}| \ge \ldots \ge |\beta^{*}_{\psi_{t+1}(p)}|$. If condition $\eta^* + \epsilon \ge \sum_{j=1}^p \lambda_j | \beta^{*}_{\psi_{t+1}(j)}| $ is not satisfied, we 
	add a new cut $\M{w}^{(t+1)} \in \mathcal{W}^{[p]} $ as per~\eqref{adding-cut-slope}; update $\mathcal{C}_{t+1}$ and go to Step~2.
\end{enumerate}

\subsection{Dual formulation and column generation for Slope-SVM}\label{sec:slope-columns}

When the amount of regularization is high, the Slope penalty (with $\lambda_{i}>0$ for all $i$) may lead to many zeros in an optimal solution to  Problem~\eqref{Slope-intro} --- computational savings are possible if we can leverage this sparsity when 
$p$ is large. To this end, we use column generation along with the  constraint generation algorithm described in Section~\ref{con-gen-slope}. In particular, given a set of columns $\mathcal{J} = \left\{ \mathcal{J}(1), \ldots, \mathcal{J}(| \mathcal{J} |)  \right\} \subset [p]$, we consider a restricted version of Problem~\eqref{Slope-intro} with $\beta_{j} = 0, j \notin {\mathcal J}$: 
$$ \min \limits_{ \B{\beta} \in \mathbb{R}^{p} , \beta_0 \in \mathbb{R} }~~~ 
\sum \limits_{i=1}^n \left( 1 - y_i (\M{x}_{i}^T\B\beta + \beta_0  ) \right)_+  +  \| \B\beta \|_{S}~~~\sbt~~~ \beta_{j} = 0, j \notin {\mathcal J}.
$$
The above can be expressed as an LP similar to Problem~\eqref{slope-primal} but with fewer columns
\begin{equation}\label{slope-restricted-dual}
\begin{myarray}[1.1]{l c c c}
\boxed{\mathcal{M}_S \left(\mathcal{C}^{\mathcal{J}} , \mathcal{J} \right)}~~  & \min \limits_{ \substack{ \B{\xi} \in \mathbb{R}^n,  \ \beta_0, \ \eta  \in \mathbb{R},\\ 
		\  \B{\beta}^+_{\mathcal{J}} ,\  \B{\beta}^-_{\mathcal{J}} \in \mathbb{R}^{| \mathcal{J}| } } }  &  \sum \limits_{i=1}^n \xi_i  + \eta &\\
&\sbt & \;\; \ \xi_i +  \sum \limits_{j \in \mathcal{J} }  y_i x_{ij} \beta_j^+ - \sum \limits_{j \in \mathcal{J} }  y_i x_{ij} \beta_j^-  + y_i \beta_0 \ge 1 & \;\;\;\; i \in [n]\\
& & \;\;\;\; (\B{\beta}^+_{\mathcal{J}},\  \B{\beta}^-_{\mathcal{J}} , \ \eta) \in \mathcal{C}^{ \mathcal{J} } \\
& & \;\;\;\; \ \B{\xi} \ge 0, \ \B{\beta}^+_{\mathcal{J}} \ge 0, \ \B{\beta}^-_{\mathcal{J}} \ge 0
\end{myarray}
\end{equation}
where, $\B\beta_{\mathcal J}$ is a sub-vector of $\B\beta$ restricted to $\mathcal J$ and $\mathcal{C}^{ \mathcal{J} }$ is the adaption of~\eqref{C-definition} restricted to $\B\beta_{\mathcal J}$:
\begin{equation}\label{CJ-definition}
\mathcal{C}^{ \mathcal{J} } := \left\{  \left(\B{\beta}^+_{\mathcal{J}},\  \B{\beta}^-_{\mathcal{J}} , \ \eta \right)  ~~\biggr\rvert ~~
 \B{\beta}^+_{\mathcal{J}}, \B{\beta}^-_{\mathcal{J}} \in \mathbb{R}^{| \mathcal{J} |}, ~ \eta \ge \max \limits_{  \B{w}_{\mathcal{J}} \in \mathcal{W}^{  \mathcal{J}  } } \B{w}^T_{\mathcal{J}}  (\B{\beta}^+_{\mathcal{J}} + \B{\beta}^-_{\mathcal{J}}) \right\}
\end{equation}
where, $\B{w}_{\mathcal J} \in \mathbb{R}^{| \mathcal{J} |}$ and $\mathcal{W}^{  \mathcal{J}  }$ is defined as:
$$\mathcal{W}^{\mathcal{J}} := \left\{\B{w}_{\mathcal J}  \ \biggr\rvert \  \exists \psi \in \mathcal{S}_{| \mathcal{J} |}~~\text{s.t.}~~  w_{ \mathcal{J}(j) } = \lambda_{\psi(j)}, \ \forall j \le | \mathcal{J} |  \right\}.$$
 
Since column generation is equivalent to constraint generation on the dual problem, to determine the set of columns to add to $\mathcal{J}$ in Problem \eqref{slope-restricted-dual}, we need the dual formulation of Slope-SVM.

\smallskip

\noindent {\bf Dual formulation for Slope-SVM:} 
We first present a dual~\cite{owl} of the Slope norm:
\begin{equation}\label{dual-form-slope-1}
\max \left\{\B{\beta}^T \B{z} ~\biggr\rvert ~ \B{\beta} \in \mathbb{R}^p, \ \| \B{\beta} \|_S \le 1 \right\} 
= \max_{ k \le p} \left\{\left( \sum_{j=1}^k \lambda_j \right)^{-1} \sum_{j=1}^k  | z_{(j)} | \right\}. \end{equation}
The identity~\eqref{dual-form-slope-1} follows from the observation that the maximum will be attained at an extreme point of the polyhedron $\mathcal{P}_S = 
\{\B{\beta} ~ | \ \| \B{\beta} \|_S \le 1 \} \subset \mathbb{R}^p$. We describe these extreme points.
We fix $k \in [p]$, and a subset $A \subset \left\{1, \ldots, p\right\}$ of size $k$ --- the extreme points of $\mathcal{P}_S$ 
having support $A$ have their nonzero coefficients to be equal, with absolute value $\left(\sum_{j=1}^k \lambda_j \right)^{-1}$. 
Finally,~\eqref{dual-form-slope-1} follows by taking a maximum over all $k \in [p]$.

A dual of Problem \eqref{slope-restricted-dual} is given by:
\begin{equation}\label{slope-dual}
    \begin{aligned}
    \max \limits_{\B{\pi} \in \mathbb{R}^{n}, \M{q} \in  \mathbb{R}^p} &~~~~~ \sum \limits_{i=1}^n \pi_i   \\
   \sbt & \;\;\;\; \  \max\limits_{k=1, \ldots, | \mathcal{J} |} \left\{ \left( \sum \limits_{j=1}^k \lambda_j \right)^{-1} \sum \limits_{j=1}^k  | q_{(j)} |  \right\}  \le 1   \\
   & \;\;\;\; \ q_j = \sum_{i=1}^n  y_i x_{ij} \pi_i, ~~~ j \in [p]  \\
  & \;\;\;\; \  \mathbf{y}^T \B{\pi} = 0    \\
  &  \;\;\;\; \   0\le \pi_i \le 1,~~ i \in [n]. 
    \end{aligned}
\end{equation}
 We now discuss how additional columns can be appended to $\mathcal J$ in Problem \eqref{slope-restricted-dual} to perform column generation. 
  Let $\B{\pi}^* \in \mathbb{R}^n$ be an optimal solution of Problem \eqref{slope-dual}. We compute the associated $\M{q}^*$ and sort its entries such that  $| q^*_{ (1) } | \ge \ldots \ge | q^*_{ ( | \mathcal{J} |)} |$. The first constraint in~\eqref{slope-dual} leads to:
\begin{equation}\label{slope-reduced-cost-bis}
\max_{k=1, \ldots, | \mathcal{J} |} \left\{ \sum_{j=1}^k  | q^*_{ (j) } | -  \sum_{j=1}^k \lambda_j \right\}  \le 0. 
\end{equation}
Now, for each column $j \notin \mathcal{J} $,  we compute its corresponding $q^*_{(j)} $ and insert it into the sorted sequence $| q^*_{ (1) } | \ge \ldots \ge | q^*_{ ( | \mathcal{J} |)} |$. 
This insertion costs at most $O(| \mathcal{J} |)$ flops: 
we update $\mathcal{J} \leftarrow \mathcal{J} \cup \left\{j\right\}$
and denote the sorted entries by:
$| q^*_{\;\; \overline{ (1)} } | \ge \ldots \ge | q^*_{ \;\; \overline{ ( | \mathcal{J} | + 1)} } |$. We add a column $j \notin \mathcal{J}$ to the current model if:
\begin{equation}\label{slope-reduced-cost}
\max_{k=1, \ldots, | \mathcal{J} |+1} \left\{ \sum_{j=1}^k  | q^*_{ \;\; \overline{(j)}} | -  \sum_{j=1}^k \lambda_j \right\}  > \epsilon,
\end{equation}
and this costs $O(| \mathcal{J} |+1)$ flops. Therefore, the total cost of sorting the vector $\M{q}^*$ and scanning through all columns (not in the current model) for negative reduced costs, is of the 
order $\mathcal{O}\left( |\mathcal{J}| \log | \mathcal{J} | + 2(p - |\mathcal{J} |)  |\mathcal{J} |  \right)$.  
This approach can be computationally expensive. 
To this end, we propose an alternative method having a smaller cost with $\mathcal{O}\left( |\mathcal{J} |  \right)$ flops.
Indeed, by combining Equations \eqref{slope-reduced-cost-bis} and \eqref{slope-reduced-cost}, a column $j \notin \mathcal{J}$ will be added to the model if it satisfies: 
\begin{equation}\label{relation-add}
| q_{j} | \ge \lambda_{ | \mathcal{J} | +1 } + \epsilon.
\end{equation}
This shows that the cost of adding a new column for Slope-SVM is the same as that in L1-SVM. The column generation algorithm is summarized below.

\medskip
\medskip
\medskip

\noindent \textsc{Algorithm~6:} \textbf{Column generation for Slope-SVM} \newline
\textbf{Input: } $\mathbf{X}$, $\mathbf{y}$, a sequence of Slope coefficients  $\left\{ \lambda_j \right\}$, a threshold $\epsilon>0$, an initial set of columns $\mathcal{J}$. 
 \newline
\textbf{Output: }A near-optimal solution $\B{\beta}^*$ for the Slope-SVM Problem \eqref{Slope-intro}.
\begin{enumerate}
	\item 
	Repeat Steps 2 to 3 until no column can be added.
	
	\item Solve the model 	$\mathcal{M}_S \left(  \mathcal{C}^{\mathcal{J} }, \mathcal{J} \right)$ in Problem \eqref{slope-primal-restricted} with warm-start (if available).
	
	\item Identify the columns $\mathcal{J}^{\epsilon} \subset \left\{1,\ldots,p \right\} \backslash \mathcal{J}$ that need to be added by using criterion~\eqref{relation-add}. Update  $\mathcal{J} \leftarrow \mathcal{J} \cup \mathcal{J}^{\epsilon}$, and go to Step~2.
\end{enumerate}

\subsection{Pairing column and constraint generation for Slope-SVM}\label{sec:slope-CCG}
We discuss how to combine the column generation (Section~\ref{sec:slope-columns}) and constraint generation (Section~\ref{con-gen-slope}) methods outlined above to solve the Slope SVM problem.

For a set of columns $\mathcal{J}$ and constraints associated with $\B{w}^{(1)}_{\mathcal J}, \ldots, \B{w}^{(t)}_{\mathcal J} \in \mathcal{W}^{\mathcal{J} }$, we consider the following problem
\begin{equation}\label{slope-doubly-restricted-primal}
\begin{myarray}[1.1]{c c c c r}
\boxed{\mathcal{M}_S \left( \mathcal{C}_t^{\mathcal{J}}, \mathcal{J}  \right)}  & \min \limits_{ \substack{\B{\xi} \in \mathbb{R}^n,  \ \beta_0  \in \mathbb{R}, \ \eta \in \mathbb{R} \\ \B{\beta}^+_{\mathcal J},\  \B{\beta}^-_{\mathcal J} \in \mathbb{R}^{| \mathcal{J}|} }}  &  \sum \limits_{i=1}^n \xi_i  + \eta& \\
&\sbt & \;\; \ \xi_i +  \sum \limits_{j \in \mathcal{J} }  y_i x_{ij} \beta_j^+ - \sum \limits_{j \in \mathcal{J} }  y_i x_{ij} \beta_j^-  + y_i \beta_0 \ge 1 & \;\;\; i \in [n]\\
& & \;\;\;\; (\B{\beta}^+_{\mathcal J},\  \B{\beta}^-_{\mathcal J} , \ \eta) \in \mathcal{C}_t^{ \mathcal{J} } & \\
& & \;\;\;\; \ \B{\xi} \ge 0, \ \B{\beta}^+_{\mathcal J} \ge 0, \ \B{\beta}^-_{\mathcal J} \ge 0,  &\\
\end{myarray}
\end{equation}
where, $\mathcal{C}_t^{\mathcal{J}}$ (and $\mathcal{C}^{\mathcal{J}}$) is a restriction of $\mathcal{C}_t$ (and $\mathcal{C}$, respectively) to the columns ${\mathcal J}$. Formally,

$$ \mathcal{C}_t^{\mathcal{J} }  := \left\{  \left(\B{\beta}^+_{\mathcal J},\  \B{\beta}^-_{\mathcal J} , \ \eta \right)  \ \biggr\rvert ~
\B{\beta}^+_{\mathcal J},\  \B{\beta}^-_{\mathcal J} \in \mathbb{R}^{| \mathcal{J} |},~~ \ \eta \ge 
(\B{w}^{(\ell)}_{\mathcal J} )^T (\B{\beta}^+_{\mathcal J} + \B{\beta}^-_{\mathcal J} ), \ \forall \ell \le t \right\} \supset \mathcal{C}^{\mathcal{J} }. $$

We use the method in Section~\ref{con-gen-slope} to refine $\mathcal{C}_t^{\mathcal{J}}$ and the method of 
Section~\ref{sec:slope-columns} to add a set of columns to $\mathcal J$. 
We use criterion~\eqref{relation-add} to select the columns to add. Let $\mathcal{J}^{\epsilon}$ denote these additional columns with  
coordinates $\mathcal{J}^{\epsilon}(k)$ for $k =1, \ldots, | \mathcal{J}^{\epsilon} |$ --- we will also assume that the elements have been sorted by increasing reduced costs.
For notational purposes, we will need to map\footnote{In other words, the existing vectors $\M{w}^{(\ell)}_{ \mathcal{J}}$ are in  $\mathbb{R}^{| \mathcal{J} |}$ and we need to extend them to $\mathbb{R}^{| \mathcal{J} | +  | \mathcal{J}^{\epsilon} |}$. Therefore, we need to define the coordinates corresponding to the new indices $\mathcal{J}^{\epsilon}$.} 
the existing cuts of $\mathcal{W}^{ \mathcal{J} }$ onto $\mathcal{W}^{ \mathcal{J}  \cup  \mathcal{J}^{\epsilon} }$. 
To this end, 
we make the following definition:
\begin{equation}\label{augment-space-plane}
w^{(\ell)}_{m} = \lambda_{ | \mathcal{J} | + k }, \ \forall m \in \mathcal{J}^{\epsilon}, \ \forall \ell \le t.
\end{equation}

We summarize our algorithm below.

\medskip

\noindent \textsc{Algorithm~7:} \textbf{Column-and-constraint generation for Slope-SVM} \newline
\textbf{Input: } $\mathbf{X}$, $\mathbf{y}$, a sequence of Slope coefficients  $\left\{ \lambda_j \right\}$, a convergence threshold $\epsilon \geq 0$. 
Initialization of $\B{\beta}^{*}$ and $\mathcal{J}$ (e.g., using the first order method in Section \ref{sec: AGD}). Define $\B{w}^{(1)}_{\mathcal J} $ as per~\eqref{adding-cut-slope}.
\newline
\textbf{Output: }A near-optimal solution $\B{\beta}^*$ for the Slope-SVM Problem \eqref{Slope-intro}.
\begin{enumerate}
	\item 
	Repeat Steps 2 to 4 until no cut can be added and $\mathcal{J}$ stabilizes.
	
	\item Solve the model $\mathcal{M}_S \left(  \mathcal{C}_t^{\mathcal{J}}, \mathcal{J} \right) $ in Problem \eqref{slope-doubly-restricted-primal} (with warm-starting enabled).
	
	\item If $\eta < \sum_{j=1}^{| \mathcal{J} |} \lambda_j | \beta^{*}_{(j)}| - \epsilon $, add a new cut $\B{w}^{(t+1)}_{\mathcal J} \in \mathcal{W}^{ \mathcal{J}}$ as in Equation \eqref{adding-cut-slope} and define $\mathcal{C}_{t+1}^{ \mathcal{J} }$.
	
	\item Identify columns $\mathcal{J}^{\epsilon} \subset \left\{1,\ldots,p \right\} \backslash \mathcal{J}$ that need to be added (based on criterion \eqref{relation-add}). 
	Map the cuts $\B{w}^{(1)}_{\mathcal J} , \ldots \B{w}^{(t+1)}_{\mathcal J}$ to  $\mathcal{W}^{ \mathcal{J} \cup \mathcal{J}^{\epsilon} }$ via~\eqref{augment-space-plane}. Update  $\mathcal{J} \leftarrow \mathcal{J} \cup \mathcal{J}^{\epsilon}$ and go to Step~2.
\end{enumerate}

\section{First order methods to obtain a low-accuracy solution}\label{sec: FOM}
The computational performance of the column/constraint generation methods described above (for Problems~\eqref{l1-SVM}, \eqref{group-SVM-intro}, \eqref{Slope-intro}) is found to benefit from a good initialization (e.g, a good estimate of columns for 
column generation). We refer the reader to~\cite{desrosiers2005primer} for additional discussions on the importance of having a good initialization for column generation. Finding a good (and easy to compute) initialization for general problems can be  challenging---in our case, 
we propose to leverage low-accuracy solutions\footnote{As our experiments demonstrate, obtaining high accuracy solutions via first order methods can become prohibitively expensive especially, when compared to the column/constraint algorithms presented here.} available from first order methods~\cite{nesterov2004introductorynew}.

Since all the problems~\eqref{l1-SVM},~\eqref{group-SVM-intro} and~\eqref{Slope-intro}, are nonsmooth, we use Nesterov's smoothing technique~\cite{nesterov-smoothing} to 
smooth the nonsmooth hinge-loss function and use 
proximal gradient descent on the composite version~\cite{nesterov2013gradient} of the problem\footnote{For the Group-SVM problem, we use proximal block coordinate methods instead of proximal gradient methods as they lead to better numerical performance.}. 
 These solutions serve as reasonable (initial) estimates for the sets of columns (constraints) necessary for the column (respectively, constraint) generation methods. 
When the number of samples and/or features become larger, a direct application of the first order methods becomes expensive and we use additional heuristics (e.g., feature screening, sub-sampling) for
scalability (Section~\ref{sec:scalability-methods}).

\subsection{Solving the composite form with Nesterov's smoothing}\label{sec: FOM-smoothing}
Note that for a scalar $u$, we have $\max \{0,u\} = \frac{1}{2}(u + |u|) =  \max_{|w| \le 1} \frac{1}{2}(u + wu)$ and this maximum is achieved when $w=\sign(u)$. Hence, the hinge-loss can be expressed as:
\begin{equation}\label{hinge-loss}
\sum \limits_{i=1}^n \left( z_i \right)_+ =  \max \limits_{\| \B{w} \|_{\infty} \le 1}  \sum \limits_{i=1}^n \frac{1}{2} \left[  z_i + w_i z_i  \right], 
\end{equation}
where $z_i = 1 - y_i (\mathbf{x}_i^T\B{\beta} +\beta_0)$. One can obtain a smooth approximation of~\eqref{hinge-loss} as follows:
\begin{equation}\label{smooth-hinge}
H^\tau(\M{z}) :=  \max \limits_{\|\M{w}\|_{\infty} \le 1}  \sum \nolimits_{i \in [n]} \frac{1}{2} \left[ z_i +  w_i z_i \right] - \frac{\tau}{2} \| \B{w} \|_2^2 
\end{equation}
where, $\tau>0$ is a parameter that controls the amount of smoothness in $H^\tau(\M{z})$ and how well it approximates $H^{0}(\M{z})=\sum_{i=1}^n (z_i)_+$. This is formalized in the following lemma adapted from~\cite{nesterov-smoothing}:
\begin{lemma}\label{lemma-smooth-paper}
	The function  
	$\M{z} \mapsto H^{\tau}(\M{z})$ is an $O(\tau)$-approximation for the hinge loss $H^{0}(\M{z})$ i.e., 
	$H^{0}(\M{z}) \in [H^{\tau}(\M{z}), H^{\tau}(\M{z}) + n \tau/2]$ for all $\M{z}$. Furthermore, $H^{\tau}(\M{z})$ has Lipschitz continuous gradient with parameter $1/{(4\tau)}$, i.e., 
	$ \| \nabla H^{\tau}(\M{z}) - \nabla H^{\tau}(\M{z}')\|_2 \leq  1/{(4\tau)} \| \M{z} - \M{z}' \|_2$ for all $\M{z},\M{z}'$.
\end{lemma}
Let us define: 
$$F^{\tau}(\B{\beta} , \beta_0) =  \max \limits_{\|\M{w}\|_{\infty} \le 1} \left\{ \sum \limits_{i=1}^n \frac{1}{2} \left[  1 - y_i (\mathbf{x}_i^T \B{\beta}  +\beta_0) + w_i(1 - y_i (\mathbf{x}_i^T \B{\beta}  +\beta_0) ) \right] - \frac{\tau}{2} \| \B{w} \|_2^2 \right\}.$$
By Lemma~\ref{lemma-smooth-paper}, it follows that $F^{\tau}(\B{\beta} , \beta_0)$ is an uniform $O(\tau)$-approximation to the hinge-loss function. The gradient of $F^{\tau}$ is given by:
\begin{equation} \label{smoothed-gradient}
\nabla F^{\tau}( \B{\beta}, \ \beta_0 ) = - \frac{1}{2} \sum \limits_{i=1}^{n}(1+w_i^{\tau})y_i \tilde{\mathbf{x}}_i  \in \mathbb{R}^{p+1},
\end{equation}
and $(\B\beta, \beta_0) \mapsto \nabla F^{\tau}( \B{\beta}, \ \beta_0 )$ is Lipschitz continuous
with parameter $C^{\tau} = \sigma_{\max} (\tilde{\M{X}}^T \tilde{\M{X}}) / (4 \tau)$, where $\tilde{\M{X}}_{n \times (p+1)}$ is a matrix with $i$th row $(\M{x}_{i}, 1)$.

We use a proximal gradient method~\cite{FISTA} to the following composite form of the smoothed-hinge-loss SVM problem with regularizer 
$\Omega(\B\beta)$
\begin{equation} \label{smoothed-l1-SVM}
\min \limits_{ \B{\beta} \in \mathbb{R}^p , \beta_0 \in \mathbb{R} } 
F^{\tau}(\B{\beta} , \beta_0)  + \Omega( \B{\beta} ), 
\end{equation}
where, $\Omega( \B{\beta} ) = \lambda \| \B{\beta} \|_1$ for L1-SVM,  $\Omega( \B{\beta} ) = \lambda \sum_{g=1}^G \| \B{\beta}_g \|_{\infty}$ for Group-SVM and $\Omega( \B{\beta} ) = \| \B{\beta} \|_S$ for Slope-SVM. 
For these choices, the proximal/thresholding operators can be computed 
efficiently, as we discuss next.

\subsection {Thresholding operators} \label{sec:thresholding}
For notational convenience we set $\B{\gamma} = (\B{\beta}, \beta^0) \in \mathbb{R}^{p+1}$.  Following~\cite{nesterov2004introductorynew,FISTA} for $L\ge C^{\tau}$, we have  
that $\B\gamma \mapsto Q_L(\B{\gamma};\B{\alpha})$ is an upper bound to $\B\gamma \mapsto F^{\tau}( \B{\gamma} )$, i.e, for all $\B{\alpha}, \B{\gamma}  \in \mathbb{R}^{p+1}$:
\begin{equation}\label{convex-continuous-gradient}
F^{\tau}( \B{\gamma} ) \le Q_L(\B{\gamma};\B{\alpha}) := F^{\tau}(\B{\alpha}) + \nabla F^{\tau}(\B{\alpha})^T(\B{\gamma}-\B{\alpha}) + \frac{L}{2} \| \B{\gamma} - \B{\alpha} \|_2^2.
\end{equation}
The proximal gradient method requires solving the following problem:
\begin{equation} \label{key-algortihm1}
\hat{\B{\gamma}} = \argmin_{ \B{\gamma}} \left\{ Q_L(\B{\gamma};\B{\alpha}) + \Omega( \B{\gamma} )  \right\}
= \argmin_{ \B{\gamma}  }\frac{1}{2} \left\| \B{\gamma} - \left(\B{\alpha} - \frac{1}{L} \nabla F^{\tau}(\B{\alpha})  \right) \right\|_2^2 + \frac{1}{L} \Omega( \B{\gamma} ).
\end{equation}
We denote: $\hat{\B{\gamma}}=(\hat{\B{\beta}}, \hat{\beta^0})$. Note that $\hat{\beta_0}$ is simple to compute and  $\hat{\B{\beta}}$ can be computed via the following thresholding operator (with $\mu>0$):
\begin{equation}\label{thresh-op1}
\mathcal{S}_{\mu \Omega }(\B{\eta}) := \argmin_{ \B{\beta} \in \mathbb{R}^p  }\frac{1}{2} \left\| \B{\beta} - \B{\eta}  \right\|_2^2 + \mu \Omega( \B{\beta} ).
\end{equation}
Computation of the thresholding operator is discussed below for specific choices of $\Omega$.

\paragraph{Thresholding operator when $\Omega(\B\beta) = \|\B\beta \|_{1}$:} 
In this case, $\mathcal{S}_{\mu \Omega }(\B{\eta})$ is available via componentwise soft-thresholding where, the scalar soft-thresholding operator is given by:
$$\argmin_{ u \in {\mathbb R}} ~ \frac{1}{2}(u-c)^2 + \mu  | u| = \sign(c) (|c| - \mu)_+.$$

\paragraph{Thresholding operator when $\Omega(\B\beta) = \sum_{g \in [G]} \|\B\beta_{g} \|_{\infty}$:}
We first consider the projection operator that projects onto an L1-ball with radius $\mu >0$
\begin{align} \label{key-algortihm1-group-bis}
\tilde{\mathcal{S}}_{ \tfrac1\mu \| \cdot \|_{1}  }  (\B{\eta}) := \argmin_{ \B{\beta}}~ \frac{1}{2}\left\| \B{\beta} - \B{\eta}  \right\|_{2}^2  \ \ \sbt \;\ \ \frac1\mu \| \B{\beta} \|_{1} \le 1.
\end{align}
From standard results pertaining to the Moreau decomposition~\cite{moreau1962fonctions} (see also~\cite{bach2011convex}) we have:
\begin{equation} \label{l_inf_prox}
\mathcal{S}_{\mu \| . \|_{\infty}} (\B\eta) + \tilde{\mathcal{S}}_{\tfrac1\mu \| \cdot \|_{1}}(\B\eta) = \B\eta
\end{equation}
for any  $\B\eta$. 
Note that $\tilde{\mathcal{S}}_{\tfrac1\mu \| \cdot \|_{1}}(\B\eta)$ 
can be computed via a simple sorting operation~\cite{L1-proj-github,L1-proj}, leading to a solution for $\mathcal{S}_{\mu \| . \|_{\infty}} (\B\eta)$.
This observation can be used to solve Problem \eqref{thresh-op1} with the L1/$L_\infty$ Group regularizer by noticing that the problem separates across the $G$ groups.

\paragraph{Thresholding operator when $\Omega(\B\beta) = \sum_{i \in [p]} \lambda_{i} |\beta_{(i)} |$:}
For the Slope regularizer, Problem~\eqref{thresh-op1} reduces to the following optimization problem:
\begin{equation}\label{slope-thresh-1}
\min_{\B\beta}~~\frac12 \| \B\beta - \B\eta \|_{2}^2 +  \mu \sum_{i} \lambda_{i} | \beta_{(i)} |. 
\end{equation}
As noted by~\cite{slope-proximal}, at an optimal solution to Problem~\eqref{slope-thresh-1}, the signs of $\beta_{j}$ and $\eta_j$ are the same.
In addition, since $\lambda_{i}$'s are decreasing, a solution to Problem~\eqref{slope-thresh-1} can be found by solving the following close relative to the isotonic regression problem~\cite{robertson1988order}
\begin{equation}\label{slope-thresholded}
\min \limits_{\M{u}} ~~ \frac{1}{2}  \left\| \M{u} - \tilde{\B{\eta}} \right\|_2^2 +  \sum \limits_{j=1}^p \mu \lambda_j u_j  
~~~~~\sbt~~~~ u_1 \ge \ldots \ge u_p \ge 0
\end{equation}
where, $\tilde{\B{\eta}}$ is a decreasing re-arrangement of the absolute values of $\B\eta$, with $\tilde{\eta}_{i} \geq \tilde{\eta}_{i+1}$ for all $i$.
If $\hat{\M{u}}$ is a solution to Problem~\eqref{slope-thresholded}---then its $i$th coordinate $\hat{u}_{i}$
corresponds to $|\hat{\beta}_{(i)}|$ where, $\hat{\B\beta}$ is an optimal solution of Problem~\eqref{slope-thresh-1}.

\subsection{Deterministic first order algorithms}\label{sec: AGD}

\noindent {\bf (Accelerated) Proximal gradient descent:} 
Let us denote the mapping~\eqref{key-algortihm1} $\B\alpha \mapsto \hat{\B\gamma}$ by the operator: 
$ \hat{\B\gamma} :=  \Theta(\B\alpha).$
The basic version of the proximal gradient descent algorithm performs the updates: $\B\alpha_{T+1} = \Theta(\B\alpha_{T})$ (for $T \geq 1$) after starting with $\B\alpha_1 = (\B\beta_{1}, \beta^0_1)$.
The accelerated gradient descent algorithm~\cite{FISTA}, which enjoys a faster convergence rate performs updates with a minor modification.
It starts with ${\B\alpha}_{1} = \tilde{\B\alpha}_{0}$, $q_{1} =1$ and then performs the updates:
$ \tilde{\B\alpha}_{T+1} = \Theta({\B\alpha}_{T})$ where, 
${\B\alpha}_{T+1} = \tilde{\B\alpha}_{T} + \frac{q_{T} -1}{q_{T+1}} (\tilde{\B\alpha}_{T} - \tilde{\B\alpha}_{T-1} )$
and $q_{T+1} = (1 + \sqrt{1 + 4 q_T ^2})/{2}$. This algorithm requires $O(1/{\epsilon})$ iterations to reach an $\epsilon$-optimal solution for the 
original problem (with the hinge-loss). We perform these updates till some tolerance criterion is satisfied, for example, $\|\B\alpha_{T+1} - \B\alpha_{T} \| \leq \eta$ for some tolerance level $\eta >0$. 
In most of our examples (cf Section~\ref{sec: computation}), we set a generous (or loose) tolerance of $\eta=10^{-3}$ or run the algorithm with a 
limit on the total number of iterations (usually a couple of hundred)\footnote{This choice is user-dependent---there is a tradeoff between computation time and the quality of solution. We recommend using a low accuracy solution as its purpose is to serve as an initialization for the column/constraint generation method}.

\medskip

\noindent {\bf Block Coordinate Descent (CD) for the Group-SVM problem:}  We describe a cyclical proximal block coordinate (CD) descent algorithm~\cite{wright2015coordinate} for the smooth hinge-loss function with the group regularizer. For the group-SVM experiments considered in Section~\ref{sec: computation}, the block CD approach was found to exhibit superior numerical performance compared to a full gradient-based procedure.
We note that~\cite{blockCD-lasso} explore block CD like algorithms for a different class of group Lasso type problems\footnote{The authors in~\cite{blockCD-lasso} consider a different class of problems than those studied here; and use exact minimization for every block (they use a squared error loss function).} and our approaches differ.

We perform a proximal gradient step on the $g$th group of coefficients (with all other blocks and $\beta_0$ held fixed) via:
\begin{equation} \label{group-g-update}
\B{\beta}_g^{t+1}\in~\argmin_{ \B{\beta}_g  } \frac{1}{2}\left\| \B{\beta}_g- \B{\beta}_g^t - \frac{1}{C^{\tau}_g}  \left\{ \nabla F^{\tau}(\B{\beta}_1^{t+1},\ldots \B{\beta}_{g-1}^{t+1}, \B{\beta}_g^t, \ldots \B{\beta}_G^t, \beta_0^{t} )  \right\}_{\mathcal{I}_g } \right\|_2^2 + \frac{\lambda}{C^{\tau}_g} \| \B{\beta}_g \|_{\infty},
\end{equation}
where $\left\{ \nabla F^{\tau} (\cdot) \right\}_{\mathcal{I}_g }$ denotes the gradient restricted to the coordinates $\mathcal{I}_g$ and $C^{\tau}_g$ is its associated Lipschitz constant: $C^{\tau}_g = \sigma_{\max} (\mathbf{X}^T_{\mathcal{I}_g} \mathbf{X}_{\mathcal{I}_g} ) / 4 \tau$. We cyclically update the coefficients across each group $g\in[G]$ and then update $\beta^0$. This continues till some convergence criterion is met.

Computational savings are possible for this block CD algorithm by 
a careful accounting of floating point operations (flops). As one moves from one group to the next, the whole gradient can be updated easily.
To this end, note that the gradient $\nabla F^{\tau}( \B{\beta}, \beta_0 )$ restricted to block $g$ is given by:
$$\left\{  \nabla F^{\tau}( \B{\beta}, \beta_0 ) \right\}_{\mathcal{I}_g } = - \frac{1}{2} \mathbf{X}_{\mathcal{I}_g}^T \left\{ \mathbf{y}\circ(1+
\M{w}^{\tau} )\right\},$$
where `$\circ$' denotes element-wise multiplication. If $\M{w}^{\tau}$ is known, the above computation requires $n| \mathcal{I}_g |$ flops. Recall that $\M{w}^{\tau}$ depends upon $\B\beta$ via: 
$w^{\tau}_i = \min\left( 1, \frac{1}{2\tau} | z_i | \right) \sign(z_i)$ where $z_i = 1 - y_i (\mathbf{x}_i^T\B{\beta}  + \beta_0), \ \forall i$. If $\B\beta$ changes from $\B{\beta}^{\text{old}}$ to $\B{\beta}^{\text{new}}$, then $\M{w}^\tau$ changes  via  an update in $\mathbf{X} \B{\beta}$ --- this change can be efficiently computed by noting that:
$\mathbf{X} \B{\beta}^{\text{new}} =  \sum_{g\in[G]} \mathbf{X}_{\mathcal{I}_g} \B{\beta}^{\text{new}}_g =  \mathbf{X} \B{\beta}^{\text{old}} + \mathbf{X}_{\mathcal{I}_g} \Delta \B{\beta}_g$
where, $\Delta \B{\beta}_g = \B{\beta}_g^{\text{new}} - \B{\beta}_g^{\text{old}}$ is a change that is only restricted to block $g$.
Hence updating $\M{w}^{\tau}$ also requires $n| \mathcal{I}_g |$ operations. 
The above suggests that one sweep of block CD across all the coordinates has a cost similar to that of computing a full gradient. 
In addition, techniques like active set updates and warm-start continuation~\cite{friedman8} can lead to improved computational performance for CD, in practice.

\subsection{Scalability heuristics for large problem instances}\label{sec:scalability-methods}
When $n$ and/or $p$ becomes large, the first order algorithms discussed above become expensive. Recall that the goal of the first order methods is to get a low-accuracy solution for the SVM problem and in particular, an estimate of the initial columns and/or constraints for the column/constraint generation algorithms. Hence, for scalability purposes, we use principled heuristics as a wrapper around the first order methods, discussed above.

\subsubsection{Correlation screening when $p$ is large and $n$ is small}\label{corr-screen-1}
When $p\gg n$, we use a feature screening method inspired by correlation screening~\cite{tibshirani2012strong}, to restrict the number of features (or groups in the case of Group-SVM). We apply the 
first order methods on this reduced set of features. Usually, for L1-SVM and Slope-SVM, we select for example, the top $10n$ columns with highest absolute inner product (note that the features are standardized to have unit L2-norm) 
with the output. For the Group-SVM problem: for each group,
we compute the inner products between every feature within this group and the response, and take their L1-norm. We then sort these numbers and take the top $n$ groups.

\subsubsection{A subsampling heuristic when $n$ is large and $p$ is small} \label{sec: heuristic-aggregation}
The methods described in Section~\ref{sec: AGD} become expensive due to gradient computations when $n$ becomes large. 
When $n$ is large but $p$ is small, we use a subsampling method inspired by~\cite{lee2017communication}. 
To get an approximate solution 
to Problem~\eqref{l1-SVM} we apply the algorithm in Section~\ref{sec: AGD} on a subsample $(y_i, \M{x}_{i}), i \in {\mathcal A}$ with sample-indices
${\mathcal A} \subset [n]$. We (approximately) solve Problem~\eqref{l1-SVM} with $\lambda \leftarrow \tfrac{|\mathcal A|}{n}\lambda$ (to adjust the dependence of $\lambda$ on the sample size) by using the algorithms in Section~\ref{sec: AGD}.
Let the solution obtained be given by $\hat{\B\beta}({\mathcal A})$. We 
obtain $\hat{\B\beta}({\mathcal A}_{j})$ for different subsamples 
${\mathcal A}_{j}$, $j \in [Q]$ and average the estimators\footnote{We note that the estimates $\hat{\B\beta}({\mathcal A_j})$ can all be computed in parallel.} $\bar{\B\beta}_{Q} = \frac{1}{Q} \sum_{j \in [Q]} \hat{\B\beta}({\mathcal A_j})$. We maintain a counter for $Q$, and stop as soon as the average stabilizes\footnote{We note that when $n$ is large (but $p$ is small), basic principles of statistical inference~\cite{van2000empirical} suggest that the estimator $\hat{\B\beta}({\mathcal A_j})$ will serve as a proxy of a minimizer of Problem~\eqref{l1-SVM} --- we average the estimators $\hat{\B\beta}({\mathcal A_j})$'s to reduce variance.}, i.e.,  
$\|\bar{\B\beta}_{Q} -\bar{\B\beta}_{Q-1} \| \leq \mu_{\text{Tol}}$ for some tolerance threshold $\mu_{\text{Tol}}$.
The estimate $\bar{\B\beta}_{Q}$ is used to obtain the violated constraints for the SVM problem and serves to initialize the constraint generation method.

\subsubsection{A subsampling heuristic when both $n$ and $p$ are large}
\label{sec: FOM-CCG}
For large $p$ (small $n$) and large $n$ (small $p$) problems, we basically apply a combination of the ideas described above in Sections~\ref{corr-screen-1} and~\ref{sec: heuristic-aggregation}.
More specifically, we choose a subsample ${\mathcal A}_{j}$ and for this subsample, we use correlation screening to reduce the number of 
features and obtain an estimator $\hat{\B\beta}({\mathcal A_j})$. We then average these estimators (across ${\mathcal A_j}$s) to obtain $\bar{\B\beta}_{Q}$. If the support of $\bar{\B\beta}_{Q}$ is too large, we sort the absolute values of the coefficients and retain the top few hundred coefficients (in absolute value) to initialize the column generation method.  The estimator $\bar{\B\beta}_{Q}$ is used to identify the samples for which the hinge-loss is nonzero --- these indices are used to initialize the 
constraint generation method.

\section{Experiments}\label{sec: computation}

We demonstrate the performance of our different methods on synthetic and real datasets.  
All computations are performed in Python 3.6 on the MIT Engaging Cluster with 1 CPU and 16GB of RAM. We use Gurobi 9.0.2~\cite{gurobi} in our experiments involving Gurobi's LP solver.
Sections~\ref{sec:results-L1-SVM},~\ref{sec-results-group-SVM} and~\ref{sec-results-slope-SVM} present results for the L1-SVM, 
Group-SVM and Slope-SVM problems (respectively).

\subsection{Computational results for L1-SVM}\label{sec:results-L1-SVM}
We present herein our computational experience with regard to the L1-SVM problem. 

\smallskip
\noindent {\textbf{Data Generation:}} We consider $n$ samples from a multivariate Gaussian distribution with 
covariance matrix $\B\Sigma =((\sigma_{ij}))$ with $\sigma_{ij} = \rho$ if $i \ne j$ and $\sigma_{ij}=1$ otherwise.
Half of the samples are from the $+1$ class and have mean $\B{\mu}_+ = (\mathbf{1}_{k_0}, \ \mathbf{0}_{p-k_0} )$. The other half are from the $-1$ class and have mean $\B\mu_- = - \B\mu_+$. We standardize the columns of $\M{X}$ to have unit L2-norm.
In the following results with synthetic datasets, unless otherwise mentioned, we take $ \rho=0.1 $, and $k_0=10$. 

\smallskip
\noindent {\textbf{Measure of accuracy:}}
For an algorithm `$\text{Alg}$', a regularization parameter $\lambda$, 
we let $f^{\text{Alg}}_{\lambda}$ be the objective value obtained by `Alg' for the unconstrained problem~\eqref{l1-SVM}. We let 
$f^{*}_{\lambda}$ be an estimate of the optimal objective value, set to be the lowest objective value among all methods. 
The (relative) accuracy in terms of objective value of a method
`$\text{Alg}$' is given by:
$$\text{ARA} = (f^{\text{Alg}}_{\lambda} - f^{*}_{\lambda})/f^{*}_{\lambda}$$ 
where ARA depends upon $\lambda$ and `Alg'. The ARA value is averaged across $R$-many replications (hence, the shorthand averaged relative accuracy: ARA).

\subsubsection{Synthetic datasets for large $p$ and small $n$}\label{sec: simulations-plarge}

\paragraph{Different initializations for column generation:} We first study the role of different initialization schemes in column generation (denoted by the shorthand CLG below) for solving the L1-SVM problem: we consider a \emph{fixed} value of the regularization parameter, set to
$\lambda = 0.01 \lambda_{\max}$; and
compare the following three schemes:
\begin{enumerate}
	\item[({\bf a})] \texttt{RP-CLG}: We compute a solution to Problem~\eqref{l1-SVM} at the desired value of $\lambda$, using a regularization path (RP) (aka continuation) approach. We compute solutions on a grid of 7 regularization parameter values
	in the range $[\tfrac12\lambda_{\max}, \lambda]$ using column generation (CLG) for every value of the regularization parameter. 
	\item[({\bf b})] \texttt{FO-CLG}: This is the column generation method initialized with a first order (FO) method (cf Section \ref{sec: AGD}) with smoothing parameter $\tau = 0.2$. We use a termination criterion of $\eta=10^{-3}$ or a maximum number of $T_{\max}=200$ iterations for the first order method. 
	We use correlation screening to retain the top $10n$ features before applying the first order method. 
	The time displayed includes the time taken to run the first order method. For reference, we report the time taken to run column generation excluding the time of the first order method: ``\texttt{CLG wo FO}''.
	\item[({\bf c})] \texttt{Cor. screening}: This initializes the column generation method by using correlation screening to retain the top $50$ features. 
\end{enumerate}

\begin{figure}[h!]
		\centering
		\begin{tabular}{cc}
		Runtime vs $p$ & Accuracy (ARA) vs $p$ \\ 
		\includegraphics[width=0.5\textwidth,trim = .3cm .5cm .3cm .9cm,  clip = true]{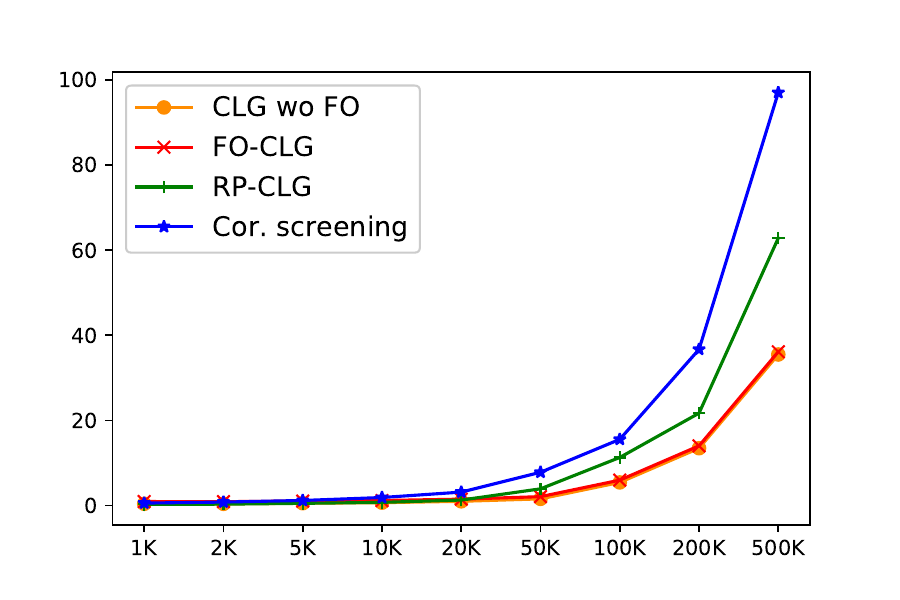}&
		\includegraphics[width=0.5\textwidth,trim = .3cm .5cm .3cm .9cm,  clip = true]{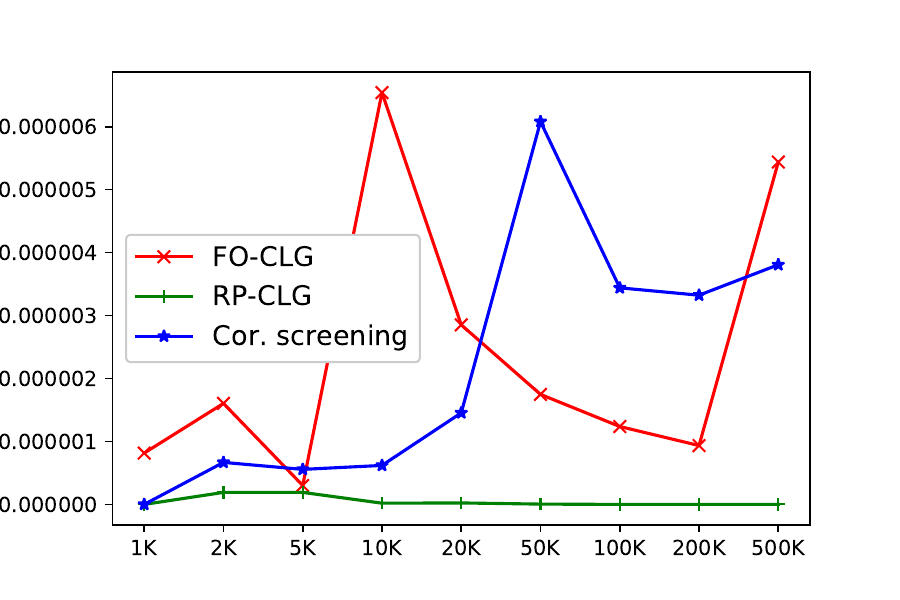} \\
		\# Features $(p)$ & \# Features $(p)$ 
				\end{tabular}
		\caption{ Comparison of different initialization schemes for the column generation method for the L1-SVM LP. Left panel shows runtime (s) versus $p$. Right panel shows the corresponding optimization accuracy (ARA) versus $p$ up to $500,000$ (here, $n = 100$).}
	\label{fig:self-compare}
\end{figure}

The comparative timings between \texttt{FO-CLG} and \texttt{Cor.~screening} show the effectiveness of 
using a first order method to initialize the column generation method. 
Method (a) computes a regularization path (using column generation) to arrive at the desired value of $\lambda$ --- it does not use any first order method like (b) --- thus any timing difference between (a) and (b) is due to the role played by the first order methods for warm-starting.  

Figure \ref{fig:self-compare} shows the results 
for synthetic datasets with $n=100, \ k_0=10, \ \rho=0.1$ and different values of $p$ in the range $1000$ to $500,000$ (results averaged across $R=10$ replications). In this figure, for the column generation methods, we consider a reduced cost threshold of $\epsilon = 0.001$, and set the maximum number of columns to be added in each iteration as 1000.  
The left panel in Figure \ref{fig:self-compare} presents the run times and the right figure presents the ARA of different methods. 
As $p$ increases, the run time for column generation when intitialized with  correlation screening, increases. Column generation is found to benefit the most when initialized with the first order method (denoted by \texttt{FO-CLG}). The runtime of the first order method is negligible compared to the time taken by column generation, as seen from the nearly overlapping profiles of \texttt{FO-CLG} and \texttt{CLG wo FO}. The accuracies of the different procedures (a)--(c) are all quite high  ARA$\sim 10^{-6}$.

\smallskip
\noindent {\textbf{Comparison with benchmarks:}}
We compare the performance of two column-generation methods \texttt{RP-CLG} and \texttt{FO-CLG} with the following benchmarks:

\begin{itemize}
	\item[\textbf{(d)}]\texttt{PSM}: This is a state-of-the-art algorithm~\cite{pang2017parametric} which is a parametric simplex based solver. 
	We use the software made available by~\citet{pang2017parametric} with default parameter-settings. 
	
	\item[\textbf{(e)}]\texttt{FOM}: This runs our first order method, denoted by FOM (we use accelerated gradient descent) with $\tau = 0.02$ for a maximum of 
	$T=100,000$ iterations. We terminate the algorithm if the maximum iteration limit is reached or the change in $\B\beta$ (in $\ell_2$ norm) in the past two iterations is less than $10^{-3}$.
	
	\item[\textbf{(f)}] \texttt{SGD}: This runs a stochastic sub-gradient algorithm on the L1-SVM Problem \eqref{l1-SVM} using Python \texttt{scikit-learn} package implementation (\texttt{SGDClassifier}) with fixed number of $10,000$ iterations. The learning rate is set to the ``optimal" parameter.

	\item[\textbf{(g)}] \texttt{SCS}: 
	This is the Splitting Conic Solver \cite{scs} version 2.1.2 with default parameter setting. The solver is called through \texttt{CVXPY} \cite{diamond2016cvxpy}. This solver is a variant of the ADMM method~\cite{boyd2011distributed}.

	\item[\textbf{(h)}] \texttt{Gurobi}: 
	This is the LP solver of Gurobi~\cite{gurobi} with default setting for solving the full L1-SVM LP. The solver is called through \texttt{CVXPY}~\cite{diamond2016cvxpy}.

\end{itemize}

All the benchmarks used above optimize for the L1-SVM objective function (note that \texttt{FOM} considers a smooth approximation of the hinge-loss).

\begin{table}[h!]
	\centering
\resizebox{\textwidth}{!}{
\begin{tabular}{lc cc cc cc c} 
\multicolumn{7}{c}{$\lam = 0.05 \lam_{\max}$} \\
\cmidrule{1-7}  & \multicolumn{2}{c}{$n=100, p=10K$} & \multicolumn{2}{c}{$n=300, p=10K$} & \multicolumn{2}{c}{$n=100, p=50K$}  \\ \hline
Method          & Time (s)  & ARA & Time (s) & ARA  & Time (s)  & ARA   \\ \hline
\texttt{PSM} & 15.3(1.23) & 6.9e-12(1.43e-12) & 193.8(24.94) & 1.8e-11(5.46e-12) & 178.1(10.05) & 2.0e-11(8.45e-12)  \\ 
\texttt{SGD} & 34.4(0.36) & 9.5e-02(9.98e-03) & 108.8(0.45) & 6.1e-02(5.22e-03) & 172.0(0.87) & 1.6e-01(3.01e-02)  \\ 
\texttt{SCS} & 76.3(3.68) & 1.4e-03(4.47e-04) & 186.9(2.88) & 1.2e-03(1.02e-04) & 456.5(7.19) & 7.3e-01(6.40e-01)  \\ 
\texttt{FOM} & 12.4(0.90) & 2.0e-02(4.31e-04) & 21.4(1.20) & 9.1e-03(2.53e-04) & 92.0(4.65) & 3.0e-02(3.77e-04)  \\ 
\texttt{Gurobi} & 34.5(1.68) & 0.0e+00(0.00e+00) & 93.1(8.66) & 0.0e+00(0.00e+00) & 250.0(8.89) & 0.0e+00(0.00e+00)  \\ 
\texttt{RP-CLG} & 0.4(0.03) & 5.5e-06(3.81e-06) & 2.0(0.14) & 1.0e-06(6.43e-07) & 3.3(0.56) & 8.5e-06(5.79e-06)  \\ 
\texttt{FO-CLG} & 0.6(0.07) & 6.7e-06(3.21e-06) & 2.0(0.14) & 7.6e-06(4.84e-06) & 1.4(0.12) & 2.0e-05(8.16e-06)  \\ \hline 
 $\|\hat{\B{\beta}}\|_0$ \& $| \cJ |$   & \multicolumn{2}{c}{55.0 ~\&~ 265.8}         & \multicolumn{2}{c}{92.8 ~\&~ 218.2}     & \multicolumn{2}{c}{65.6 ~\&~ 257.8}      \\ \hline 
 && && && \\
\end{tabular} } \\
\resizebox{\textwidth}{!}{
\begin{tabular}{lc cc cc cc c} 
 \multicolumn{7}{c}{ $\lambda=0.2 \lam_{\max}$ } \\ \hline
& \multicolumn{2}{c}{$n=100, p=10K$} & \multicolumn{2}{c}{$n=300, p=10K$} & \multicolumn{2}{c}{$n=100, p=50K$}  \\ \hline
Method          & Time (s)  & ARA & Time (s) & ARA  & Time (s)  & ARA   \\ \hline
\texttt{PSM} & 9.5(0.61) & 3.9e-13(1.33e-13) & 87.9(13.15) & 8.9e-13(3.69e-13) & 153.0(10.03) & 2.9e-12(1.41e-12)  \\ 
\texttt{SGD} & 51.9(0.47) & 1.1e-02(1.89e-03) & 162.6(0.44) & 5.3e-03(6.88e-04) & 260.4(1.64) & 1.1e-02(2.29e-03)  \\ 
\texttt{SCS} & 84.4(2.63) & 3.0e-03(7.26e-04) & 201.6(22.27) & 2.3e-03(9.70e-04) & 420.7(10.80) & 4.2e-02(6.05e-03)  \\ 
\texttt{FOM} & 8.2(0.35) & 3.7e-03(3.01e-04) & 15.5(0.79) & 1.0e-03(7.59e-05) & 125.6(0.13) & 6.4e-03(5.38e-04)  \\ 
\texttt{Gurobi} & 26.2(0.89) & 1.6e-16(6.61e-17) & 53.3(8.16) & 1.2e-16(4.83e-17) & 264.9(15.96) & 0.0e+00(0.00e+00)  \\ 
\texttt{RP-CLG} & 0.5(0.10) & 1.4e-07(9.93e-08) & 0.4(0.08) & 1.8e-08(1.65e-08) & 4.8(0.20) & 9.1e-07(8.18e-07)  \\ 
\texttt{FO-CLG} & 0.3(0.00) & 2.2e-08(1.92e-08) & 0.9(0.03) & 1.9e-16(7.34e-17) & 0.9(0.04) & 6.5e-07(4.57e-07)  \\ \hline 
 $\|\hat{\B{\beta}}\|_0$ \& $| \cJ |$   & \multicolumn{2}{c}{36.8 ~\&~ 73.6}         & \multicolumn{2}{c}{30.8 ~\&~ 42.8}     & \multicolumn{2}{c}{53.0 ~\&~ 144.8}      \\ \hline 
\end{tabular}}
	\caption{ {\bf L1-SVM, Synthetic dataset ($p \gg n$)}: Training time (s) for L1-SVM: our proposed column generation method versus various benchmarks on synthetic datasets. We show two different values of $\lambda$: $\lam = 0.05 \lam_{\max}$ [top table] and $\lambda = 0.2\lambda_{\max}$ [bottom table]. Each table presents results for different values of $(n,p)$ with $p \gg n$. For each table, the last row presents the number of nonzeros in $\hat{\B\beta}$ (an optimal solution to the L1-SVM problem)  and the number of columns, $|\cJ|$, in the restricted problem, upon termination of \texttt{FO-CLG}. Results are averaged over 5 replications; and the numbers with parenthesis denote standard errors.} \label{table:simulated-p-large2}
\end{table}

Table~\ref{table:simulated-p-large2} presents the results for different methods for
$\lam = 0.05 \lam_{\max}$ [top panel] and $\lam = 0.2 \lam_{\max}$ [bottom panel]. Here we run \texttt{RP-CLG} and \texttt{FO-CLG} with reduced cost tolerance $\ep = 0.01$. For each combination $(n,p)=(100, 10K)$, $(n,p)=(300, 10K)$ and $(n,p)=(100, 50K)$, and for each method considered,
we show the runtime and associated ARA (results are averaged over 5 replications, and numbers within parenthesis denote standard errors). For the instance in Table \ref{table:simulated-p-large2} with $n=100$ and $p=50K$, we run FOM for 2000 iterations---in this instance, stopping the algorithm with a tolerance $10^{-3}$ leads to a low-quality solution. 

 In addition, to give an idea of the sparsity level of the solution (for the $\lambda$-values chosen), we present the support size of an optimal solution 
$\hat{\B{\beta}}$ (computed by \texttt{Gurobi})  and the number of columns $|\cJ|$ in the restricted problem, upon termination of \texttt{FO-CLG}. From Table~\ref{table:simulated-p-large2}, it can be seen that our proposed methods (\texttt{RP-CLG} and \texttt{FO-CLG}) outperform all the benchmark methods in runtime by a factor of $30X \sim 500X$. In terms of solution accuracy, our column generation methods (reaching an ARA$\sim 10^{-5}$ or smaller)
are comparable to that of \texttt{Gurobi}, \texttt{PSM}.
The operator splitting method \texttt{SCS} leads to solutions of low-accuracy --- their ARA $\sim 10^{-3}$ for $(n,p) = (100, 10K)$ and $(300, 10K)$, but the ARA is slightly larger $\sim 10^{-1}$--$10^{-2}$ for $(n,p) = (100, 50K)$. \texttt{SGD} and \texttt{FOM} also lead to low-accuracy solutions; with \texttt{FOM} leading to somewhat better performance compared to \texttt{SCS} and \texttt{SGD}.
We note that the poor performance of \texttt{SGD} in Table~\ref{table:simulated-n-large2} should not come as a surprise, as stochastic subgradient methods are not designed for small $n$ and large $p$ settings. In addition, given our earlier discussion, deterministic subgradient methods for nonsmooth problems have a slower convergence compared to Nesterov's smoothing technique --- the \texttt{FOM} presented here is an instance of Nesterov's smoothing technique (see Section~\ref{sec: FOM}).

\smallskip
\noindent {\textbf{Performance of CLG for difference tolerance levels $\ep$:}}
{Table~\ref{table:simulated-p-large2} above presents the results for \texttt{FO-CLG} for $\ep = 0.01$. 
In Figure~\ref{fig:different tol} we study sensitivity to the choice of $\epsilon$---we present the runtime and ARA of \texttt{FO-CLG} under different tolerance values $\ep \in \{0.01, 0.03, 0.1, 0.3, 1\}$ for $\lam = 0.05\lam_{\max}$. 
It can be seen that the ARA of \texttt{FO-CLG} changes with $\ep$---with $\ep=0.01$ generally leading to a solution of high-accuracy.
As $\ep$ decreases, the runtime increases slightly.}

\begin{figure}[h!]
\centering
\begin{tabular}{cc}
		\includegraphics[width=0.48\textwidth,trim = .3cm .6cm .3cm .3cm,  clip = true]{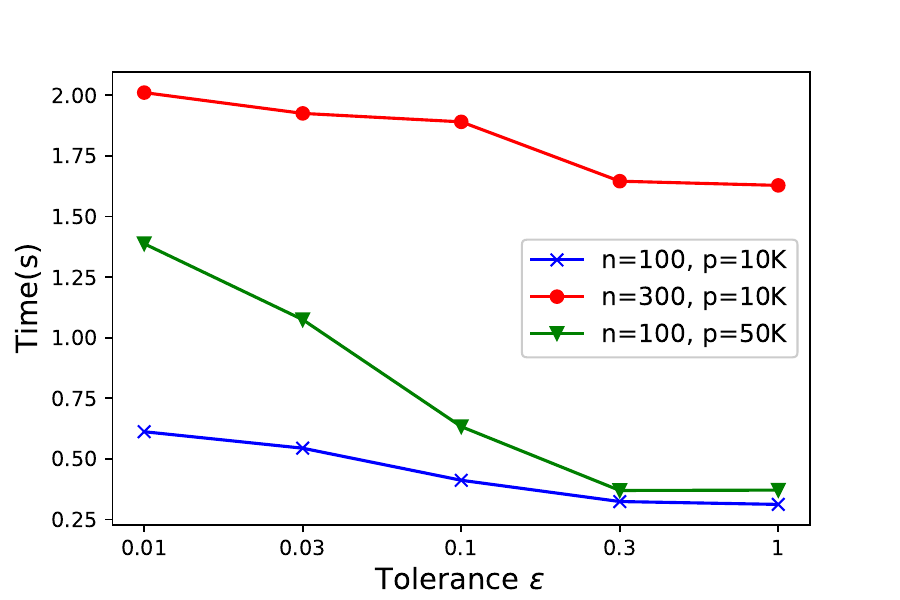}&
		\includegraphics[width=0.48\textwidth,trim = .2cm .6cm .3cm .3cm,  clip = true]{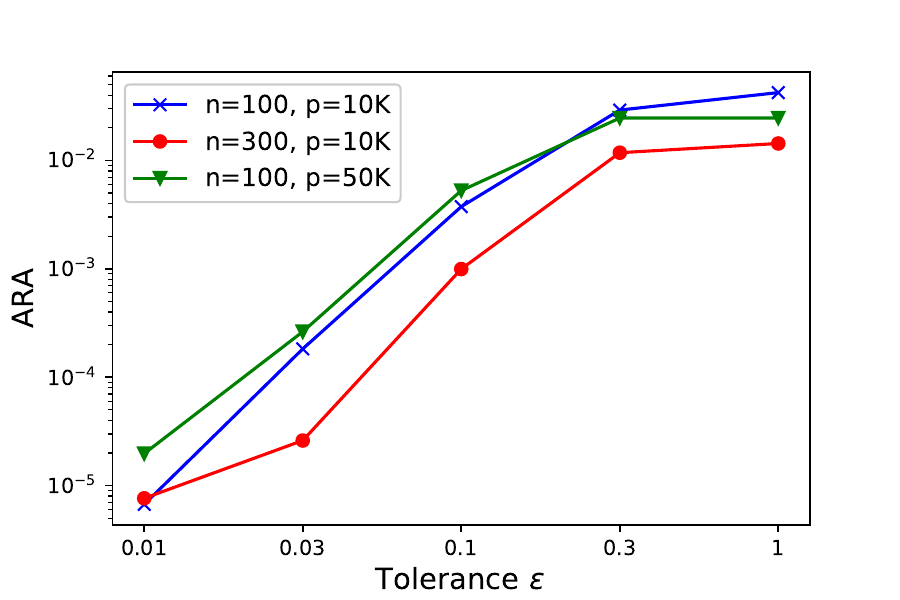} \\
		Tolerance ($\epsilon$) & 	Tolerance ($\epsilon$)
\end{tabular}
\caption{Runtime (s) and ARA of \texttt{FO-CLG} under different tolerance levels $\ep$ for the L1-SVM LP. We consider three different problem-sizes as indicated in the figure legends.}\label{fig:different tol}
\end{figure}

\medskip

\noindent {\textbf{Computing a path of solutions:}}
The results above discuss obtaining a solution to the L1-SVM problem for a fixed value of $\lambda$. Here
we present results for computing L1-SVM solutions for a grid of $\lambda$ values---we focus on comparing the performances of our column generation methods: \texttt{RP-CLG} and \texttt{FO-CLG}.
We fix $n=1000, p =100K,  k_0 = 10, \rho = 0.1$, and a sequence of 50 values of $\lam$: 
\begin{equation}\label{defn-kappa-lambda}
\lam = \kappa \lam_{\max}, ~~
\kappa \in \{0.01, 0.02, \ldots , 0.49, 0.50\}.
\end{equation}
For convenience, let us denote the different values of $\lambda$ by $\lam_1> \lam_2> \cdots > \lam_{m}$ (here, $m = 50$). 
For \texttt{RP-CLG}, we solve the problems in the order $\lam_1$, $\lam_2, ....., \lam_{m-1}, \lam_m$, and use the solution at $\lam_{i}$ as a warm start to compute the solution at $\lam_{i+1}$ (with column generation enabled). 
For \texttt{FO-CLG}, we solve the problems for different $\lam_i$'s \emph{independently}---we use our first order method to initialize the initial set of columns for column generation.  
In the left panel of Figure \ref{fig:path solution}, we present the runtime for both methods at each $\lam_i $. We also present \texttt{CLG wo FO}, which denotes the runtime of column generation (not including the runtime for the first order method) in the implementation of \texttt{FO-CLG}. In the right panel of Figure~\ref{fig:path solution}, we present the support size of the solution $\hat{\B\beta}$ (denoted by ``beta supp" in the figure). As shown in Figure \ref{fig:path solution}, when $\lam$ decreases, the support size of the solution increases, and 
the runtime of both \texttt{RP-CLG} and 
\texttt{FO-CLG} increase. The performance of \texttt{RP-CLG}, which performs warm-starts along the sequence of $\lambda$-values appears to be superior to \texttt{FO-CLG}. As \texttt{FO-CLG} does not use regularization-path continuation, one can compute solutions for the different $\lambda_{i}$-values in parallel, which is not possible with the sequential approach \texttt{RP-CLG}. Note however, in our experiments, 
the L1-SVM solutions for the different $\lambda_{i}$ values are computed sequentially (and not in parallel).
Based on this experiment, we recommend using \texttt{RP-CLG} when one wishes to compute a path of solutions to the L1-SVM problem, in a sequential fashion.

\begin{figure}[h!]

\centering
\begin{tabular}{cc}
		\includegraphics[width=0.45\textwidth,trim = .3cm .6cm .3cm .3cm,  clip = true]{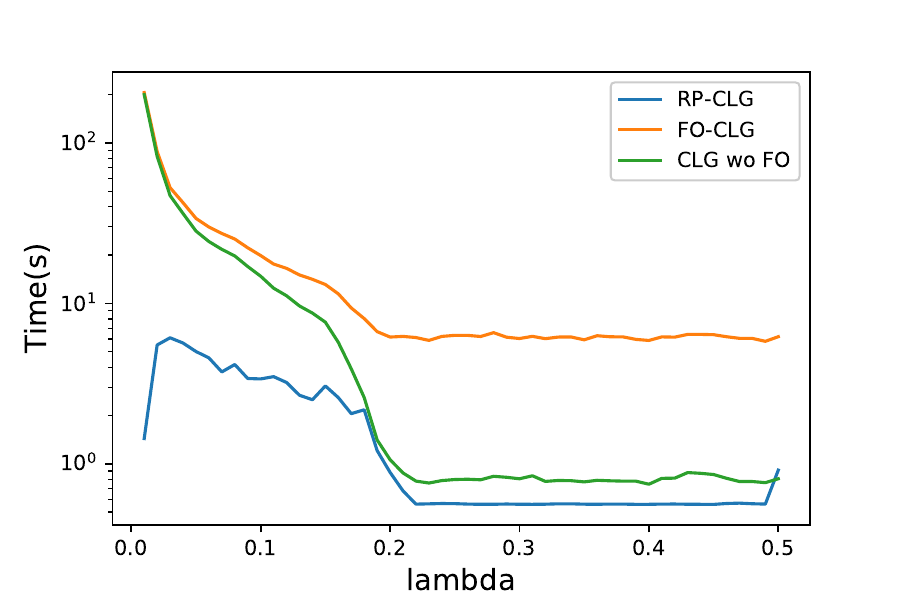}&
		\includegraphics[width=0.45\textwidth,trim = .3cm .6cm .3cm .3cm,  clip = true]{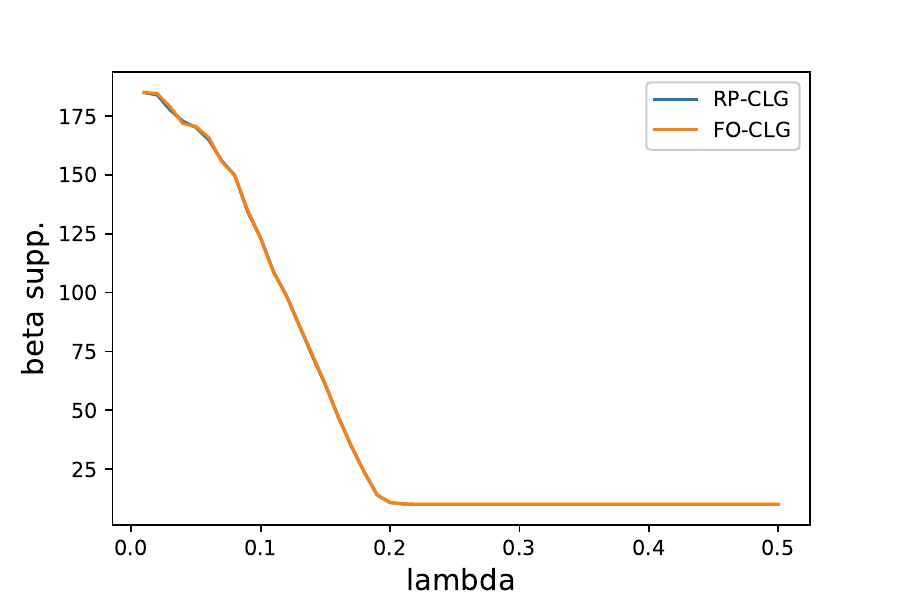} \\
		$\kappa$ (here, $\kappa=\lambda/\lambda_{\max}$) & $\kappa$ (here, $\kappa=\lambda/\lambda_{\max}$)
\end{tabular}
\caption{L1-SVM solutions $(n=1K, p=100K)$ on a regularization path. [Left] Runtime (s) versus $\kappa$, defined in~\eqref{defn-kappa-lambda} [Right] Support size of $\B\beta$ denoted by ``beta supp(ort)", i.e., number of nonzero SVM coefficients $\B\beta$ for different values of $\kappa$, as available from \texttt{FO-CLG} and \texttt{RP-CLG}. (Note that the two profiles for ``beta supp" are identical).}\label{fig:path solution}
\end{figure}

\subsubsection{Synthetic datasets for large $n$ and small $p$}\label{sec: simulations-nlarge}

When $n \gg p$, we illustrate the usefulness of our constraint generation procedure: 

\begin{enumerate}
	\item[(\bf i)] \texttt{FO-CNG}: This is our constraint generation (CNG) method when initialized with a subsampling based first order (FO) heuristic (cf Section~\ref{sec:scalability-methods}). 
\end{enumerate}
For \texttt{FO-CNG}, we use a reduced cost convergence threshold of $\ep = 0.01$ (we limit the number of constraints to be added to at most $400$).
We compare \texttt{FO-CNG} with several benchmarks: \texttt{SGD}, \texttt{SCS}, \texttt{FOM}, \texttt{Gurobi} as discussed in Section~\ref{sec: simulations-plarge}. We do not present the results for \texttt{PSM} as this was found to be much slower on these instances with large $n$ (sometimes \texttt{PSM} would not run on these instances). 

Table~\ref{table:simulated-n-large2} presents the results for $\lam = 0.001 \lam_{\max}$ and $\lam = 0.01 \lam_{\max}$. The sub-tables of Table~\ref{table:simulated-n-large2} consider $(n,p) = ( 10K, 100)$, $(10K, 300)$ and $(50K, 100)$, and we present results averaged over 5 replications. As constraint generation leverages sparsity in $\B\xi$, to get an idea of the sparsity of the problem we are dealing with, we present (i) the support size of the solution $\hat{\B\xi}$ (computed by \texttt{Gurobi}); and (ii) the number of constraints $|\cI|$ in the restricted problem, upon termination of \texttt{FO-CNG}.
From Table~\ref{table:simulated-n-large2}, it can be seen that 
\texttt{FO-CNG} outperforms other methods by a factor $ 4X \sim 30X $. In particular, \texttt{FO-CNG} has better performance when $\lam$ is small (recall, we are considering the setting where $n \gg p$). This is because a small value of $\lam$ imparts less shrinkage on the SVM coefficients $\B\beta$, hence
the support size of $\hat{\B{\xi}}$ is small (i.e., the number of falsely-classified samples is small). As a result, the constraint generation method speeds up overall runtime making \texttt{FO-CNG} computationally attractive. Both \texttt{FO-CNG} and \texttt{Gurobi} (solving the full LP) reach high accuracy solutions. The accuracy of solutions obtained by \texttt{FO-CNG} is considerably higher compared to \texttt{SGD}, \texttt{SCS} and \texttt{FOM}. In this example ($n \gg p$), we observe that \texttt{SGD} works well compared to the examples in Section~\ref{sec: simulations-plarge} where $p \gg n$.

Finally, we note that in the examples considered in Table~\ref{table:simulated-n-large2} when $\lam$ is very large and the support size of $\hat{\B{\xi}}$ is close to $n$,
the runtime of \texttt{FO-CNG} is likely going to increase.

\begin{table}[h!]
	\centering
\resizebox{\textwidth}{!}{
\begin{tabular}{lc cc cc cc c} 
\multicolumn{7}{c}{$\lam = 0.001 \lam_{\max}$} \\
\cmidrule{1-7}  & \multicolumn{2}{c}{$n=10K, p=100$} & \multicolumn{2}{c}{$n=10K, p=300$} & \multicolumn{2}{c}{$n=50K, p=100$}  \\ \hline
Method          & Time (s)  & ARA & Time (s) & ARA  & Time (s)  & ARA   \\ \hline
\texttt{SGD} & 54.2(4.05) & 1.8e-02(8.78e-04) & 117.4(2.89) & 4.5e-02(2.19e-03) & 313.1(3.19) & 7.4e-03(3.77e-04)  \\ 
\texttt{SCS} & 51.4(4.03) & 1.7e-04(4.31e-05) & 117.6(1.74) & 5.2e-04(1.90e-04) & 241.9(11.19) & 3.9e-05(7.35e-06)  \\ 
\texttt{FOM} & 170.1(3.40) & 2.8e-03(1.39e-04) & 207.0(9.03) & 5.9e-03(1.12e-04) & 1147.4(35.85) & 5.9e-04(1.73e-05)  \\ 
\texttt{Gurobi} & 66.3(3.52) & 0.0e+00(0.00e+00) & 133.8(5.48) & 0.0e+00(0.00e+00) & 626.6(42.49) & 3.4e-16(1.95e-16)  \\ 
\texttt{FO-CNG} & 3.0(0.06) & 1.3e-05(1.12e-05) & 6.3(0.24) & 2.1e-15(1.99e-16) & 20.8(0.22) & 0.0e+00(0.00e+00)  \\ \hline 
$\|\hat{\B{\xi}}\|_0$ \& $| \cI |$ & \multicolumn{2}{c}{87.8 ~\&~ 443.8}       & \multicolumn{2}{c}{88.8 ~\&~ 362.2}     & \multicolumn{2}{c}{538.6 ~\&~ 2277.8}      \\ \hline 
 && && && \\
\end{tabular} } \\
\resizebox{\textwidth}{!}{
\begin{tabular}{lc cc cc c} 
 \multicolumn{7}{c}{ $\lambda=0.01 \lam_{\max}$ } \\ \hline
& \multicolumn{2}{c}{$n=10K, p=100$} & \multicolumn{2}{c}{$n=10K, p=300$} & \multicolumn{2}{c}{$n=50K, p=100$}  \\ \hline
Method          & Time (s)  & ARA & Time (s) & ARA  & Time (s)  & ARA   \\ \hline
\texttt{SGD} & 55.6(2.88) & 4.0e-03(3.51e-04) & 130.1(4.50) & 8.2e-03(2.38e-04) & 326.7(5.36) & 3.6e-03(1.28e-04)  \\ 
\texttt{SCS} & 28.1(3.99) & 2.3e-05(3.62e-06) & 79.6(7.68) & 3.2e-05(6.57e-06) & 112.9(7.21) & 1.2e-05(3.27e-06)  \\ 
\texttt{FOM} & 47.8(0.90) & 5.8e-04(1.05e-05) & 68.8(4.00) & 1.0e-03(3.21e-05) & 297.4(0.82) & 1.3e-04(5.87e-06)  \\ 
\texttt{Gurobi} & 49.3(1.84) & 9.1e-17(5.92e-17) & 104.1(5.43) & 7.3e-17(6.56e-17) & 377.8(15.05) & 2.1e-16(1.89e-16)  \\ 
\texttt{FO-CNG} & 9.0(0.26) & 3.5e-06(3.16e-06) & 11.4(0.32) & 8.3e-06(3.14e-06) & 32.4(0.83) & 1.3e-06(4.80e-07)  \\ \hline 
$\|\hat{\B{\xi}}\|_0$ \& $| \cI |$ & \multicolumn{2}{c}{501.4 ~\&~ 1090.2}       & \multicolumn{2}{c}{472.2 ~\&~ 813.8}     & \multicolumn{2}{c}{2559.4 ~\&~ 3472.8}      \\ \hline 
\end{tabular}}	
	\caption{{\bf L1-SVM, Synthetic dataset ($n \gg p$)}: Training time for L1-SVM versus state-of-art methods on synthetic datasets. We show two different values of $\lambda$: $\lam = 0.001 \lam_{\max}$ [top table] and $\lambda = 0.01\lambda_{\max}$ [bottom table]. Each table presents results for different values of $(n,p)$ with $n \gg p$. For each table, the last row presents the number of nonzeros in $\B\xi$ in an optimal solution to the problem and the number of active constraints $|\cI|$ upon termination of \texttt{FO-CNG}. } \label{table:simulated-n-large2}
\end{table}

\subsubsection{Synthetic datasets for $n \approx p$}\label{sec: simulations-nplarge}
We study the performance of the algorithms when both $n$ and $p$ are comparable and moderately large. In this case, we use the following method with both column and constraint generation:
\begin{enumerate}
	\item[({\bf j})] \texttt{FO-CLCNG}: This is the combined column-and-constraint generation method, denoted by the shorthand (CLCNG) (i.e., Algorithm~4), initialized with the first order method discussed\footnote{For the subsampling heuristic, once the average
estimate was obtained, we took the top $200$ coefficients (in terms of absolute value) to
initialize the set of columns for column generation.} in Section~\ref{sec: FOM-CCG}. The column/constraint generation reduced cost thresholds are set to be equal $\ep:= \ep_1 = \ep_2$. 
\end{enumerate}
For \texttt{FO-CLCNG}, we use $\ep = 0.01$ and limit the maximum number of columns/constraints that are added at an iteration to 400. 
We compare \texttt{FO-CLCNG} with benchmarks including \texttt{SGD}, \texttt{SCS}, \texttt{FOM}, \texttt{Gurobi} under the same setting as  Section \ref{sec: simulations-plarge}. Once again, \texttt{PSM} is found to be significantly slow as $n$ is large, hence we do not include it in our results. 
Table~\ref{table:simulated-np-large2} presents the results for $\lam = 0.01 \lam_{\max}$ and $\lam = 0.1 \lam_{\max}$, with $(n,p) = ( 3K, 3K)$, $(2K, 5K)$ and $(5K, 2K)$. Note that in these examples, $\M{X}$ is dense so we do not consider larger problem-sizes---larger $n,p$ values with a sparse $\M{X}$ are considered in Section~\ref{sec:real-datasets-expts}. 

Table~\ref{table:simulated-np-large2} presents runtimes (s) and ARA values (means and standard errors across  $5$ independent experiments). In addition, to get an idea of the sparsity level of the problem, we also present the support sizes of the solutions $\hat{\B{\beta}}$ and 
$\hat{\B{\xi}}$ (computed by \texttt{Gurobi}). We also list the number of columns and constraints ($|\cJ| $ and $|\cI|$) in the resctricted problem, upon termination of \texttt{FO-CLCNG}. 
As shown in the sub-tables of Table~\ref{table:simulated-np-large2}, for $\lam = 0.01 \lam_{\max}$, \texttt{FO-CLCNG} has a $7X\sim 30X $   speedup over other methods; for $\lam = 0.1 \lam_{\max}$, \texttt{FO-CLCNG} has a $4X \sim 50X$  speedup over other methods. At the same time, it is important to note that the ARA of \texttt{FO-CLCNG} is around $10^{-5} \sim 10^{-6}$ --- this is notably 
better than that of \texttt{SCS}, \texttt{FOM} and \texttt{SGD}.

\begin{table}[h!]
	\centering
\resizebox{\textwidth}{!}{
\begin{tabular}{lc cc cc cc c} 
\multicolumn{7}{c}{$\lam = 0.01 \lam_{\max}$} \\
\cmidrule{1-7}  & \multicolumn{2}{c}{$n=3K, p=3K$} & \multicolumn{2}{c}{$n=2K, p=5K$} & \multicolumn{2}{c}{$n=5K, p=2K$}  \\ \hline
Method          & Time (s)  & ARA & Time (s) & ARA  & Time (s)  & ARA   \\ \hline
\texttt{SGD} & 393.3(32.06) & 1.4e-01(7.79e-03) & 387.3(25.89) & 4.2e-01(1.20e-02) & 486.9(46.32) & 4.5e-02(1.28e-03)  \\ 
\texttt{SCS} & 281.8(34.91) & 1.7e-04(1.73e-05) & 261.7(37.13) & 3.0e-04(7.05e-05) & 426.3(37.83) & 1.3e-04(2.44e-05)  \\ 
\texttt{FOM} & 100.6(9.74) & 5.2e-03(1.27e-04) & 93.4(5.33) & 7.9e-03(1.83e-04) & 142.3(12.46) & 3.0e-03(5.39e-05)  \\ 
\texttt{Gurobi} & 103.9(27.61) & 0.0e+00(0.00e+00) & 109.8(31.84) & 0.0e+00(0.00e+00) & 585.9(57.41) & 0.0e+00(0.00e+00)  \\ 
\texttt{FO-CLCNG} & 14.1(0.40) & 3.5e-07(2.59e-07) & 9.9(0.11) & 8.2e-06(2.11e-06) & 20.2(0.44) & 3.6e-06(2.82e-06)  \\ \hline 
$\|\hat{\B{\beta}}\|_0$ \& $| \cJ |$   & \multicolumn{2}{c}{170.2 ~\&~ 646.4}       & \multicolumn{2}{c}{162.0 ~\&~ 692.0}     & \multicolumn{2}{c}{188.0 ~\&~ 634.2}      \\ \hline 
$\|\hat{\B{\xi}}\|_0$ \& $| \cI |$ & \multicolumn{2}{c}{132.8 ~\&~ 565.2}       & \multicolumn{2}{c}{87.6 ~\&~ 533.4}     & \multicolumn{2}{c}{221.4 ~\&~ 668.8}      \\ \hline 
 && && && \\
\end{tabular} } \\
\resizebox{\textwidth}{!}{
\begin{tabular}{lc cc cc c} 
 \multicolumn{7}{c}{ $\lambda=0.1 \lam_{\max}$ } \\ \hline
& \multicolumn{2}{c}{$n=3K, p=3K$} & \multicolumn{2}{c}{$n=2K, p=5K$} & \multicolumn{2}{c}{$n=5K, p=2K$}  \\ \hline
Method          & Time (s)  & ARA & Time (s) & ARA  & Time (s)  & ARA   \\ \hline
\texttt{SGD} & 521.6(40.17) & 3.5e-03(1.80e-04) & 527.9(26.79) & 5.8e-03(6.02e-04) & 539.5(24.21) & 2.4e-03(2.41e-04)  \\ 
\texttt{SCS} & 150.5(22.98) & 3.4e-04(4.09e-05) & 178.7(9.48) & 3.8e-04(1.12e-04) & 259.2(26.38) & 8.2e-05(1.40e-05)  \\ 
\texttt{FOM} & 62.2(4.98) & 2.6e-04(1.36e-05) & 60.5(2.72) & 4.4e-04(4.86e-05) & 71.0(5.52) & 1.3e-04(1.47e-05)  \\ 
\texttt{Gurobi} & 47.6(6.74) & 0.0e+00(0.00e+00) & 44.4(5.67) & 0.0e+00(0.00e+00) & 354.9(116.66) & 0.0e+00(0.00e+00)  \\ 
\texttt{FO-CLCNG} & 11.7(0.14) & 3.1e-06(1.65e-06) & 8.3(0.13) & 8.2e-07(6.71e-07) & 18.7(0.16) & 1.2e-06(6.69e-07)  \\ \hline 
$\|\hat{\B{\beta}}\|_0$ \& $| \cJ |$   & \multicolumn{2}{c}{56.8 ~\&~ 255.8}       & \multicolumn{2}{c}{68.4 ~\&~ 322.2}     & \multicolumn{2}{c}{48.0 ~\&~ 235.8}      \\ \hline 
$\|\hat{\B{\xi}}\|_0$ \& $| \cI |$ & \multicolumn{2}{c}{750.8 ~\&~ 1053.2}       & \multicolumn{2}{c}{504.8 ~\&~ 751.2}     & \multicolumn{2}{c}{1287.0 ~\&~ 1657.4}      \\ \hline 
\end{tabular}}	
	\caption{{\bf L1-SVM, Synthetic dataset ($n \approx p$)}: Training time for L1-SVM versus state-of-art methods on synthetic datasets. We show two different values of $\lambda$: $\lam = 0.01 \lam_{\max}$ [top table] and $\lambda = 0.1\lambda_{\max}$ [bottom table]. Each table presents results for different values of $(n,p)$ with $n \approx p$. For each table, the last row presents the number of nonzeros in $\B\beta$ and $\B\xi$ in an optimal solution to the problem and the number of active variables and constraints ($|\cJ| $ and $|\cI|$) upon termination of \texttt{FO-CLCNG}. } \label{table:simulated-np-large2}
\end{table}

\medskip

\noindent {\textbf{Performance under different tolerance levels $\ep$:}}
Table~\ref{table:simulated-np-large2} presents results of \texttt{FO-CLCNG} for a fixed value of $\ep = 0.01$. In Figure \ref{fig:different tol2}, to understand sensitivity to the choice of $\epsilon$, we present the runtime and ARA of \texttt{FO-CLCNG} under different choices of $\ep \in \{0.01, 0.03, 0.1, 0.3, 1\}$ for $\lam = 0.01 \lam_{\max}$.
The presented results are the mean of $5$ independent replications. Similar to Figure~\ref{fig:different tol}, it can be seen in 
Figure~\ref{fig:different tol2}, that as $\ep$ decreases from $1$ to $0.01$, the ARA of \texttt{FO-CLCNG} decreases from $10^{-1}$ to $10^{-5} \sim 10^{-6}$, while the runtime only increases slightly. Therefore a tolerance of $\ep  = 0.01$ is sufficiently small and leads to a fairly high accuracy for the numerical experiments considered.

\begin{figure}[h!]
\centering
\begin{tabular}{cc}
		\includegraphics[width=0.48\textwidth,trim = .3cm .6cm .3cm .3cm,  clip = true]{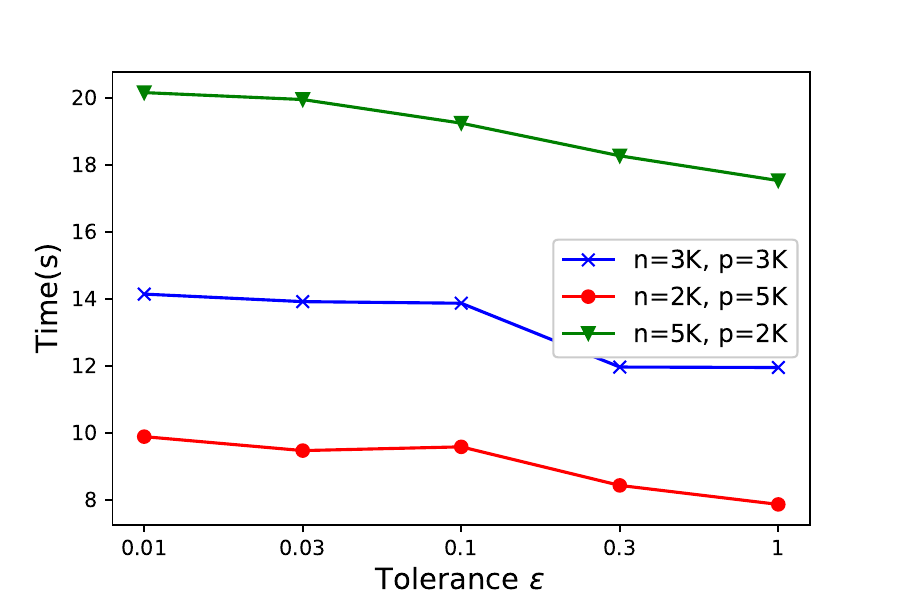}&
		\includegraphics[width=0.48\textwidth,trim = .2cm .6cm .3cm .3cm,  clip = true]{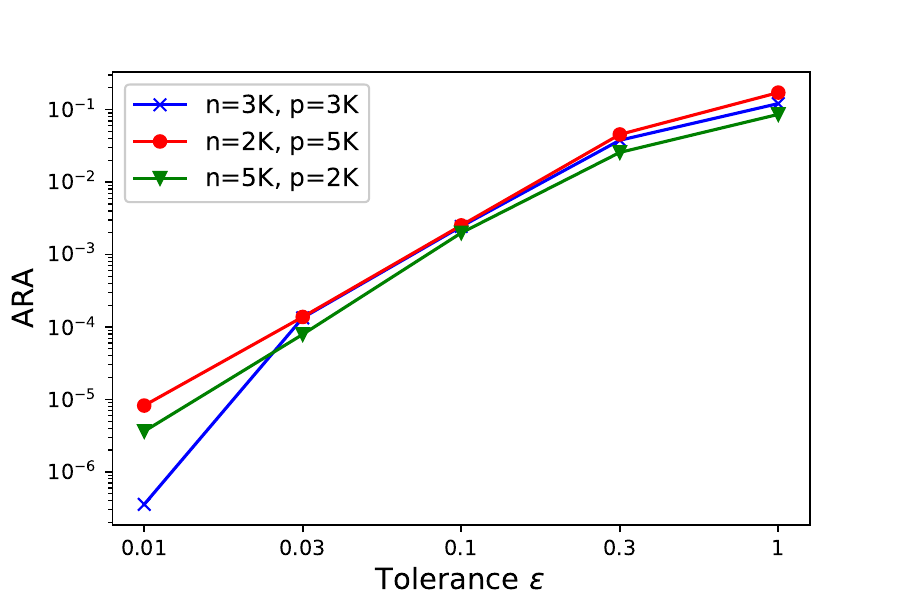} \\
		Tolerance ($\epsilon$) & 	Tolerance ($\epsilon$)
\end{tabular}

		\caption{
		Runtime (s) and ARA of \texttt{FO-CLCNG} under different choices of the tolerance threshold $\ep$. We consider 3 different $(n,p)$-values. (This figure mirrors Figure~\ref{fig:different tol} that shows results for column generation alone: \texttt{FO-CLG}).}
	\label{fig:different tol2}
\end{figure}

\subsubsection{Real datasets with large $n$ and $p$}\label{sec:real-datasets-expts}

Finally, we assess the quality of our hybrid column-and-constraint generation method  (\texttt{FO-CLCNG}) on large real datasets. For a fair comparison, we compare different methods in terms of their ability to optimize the same L1-SVM optimization problem. 
We consider three popular open-source datasets \texttt{rcv1}, \texttt{news20} 
and \texttt{real-sim}
that can be found at \url{https://www.csie.ntu.edu.tw/~cjlin/liblinear/}. We also consider a larger dataset derived from \texttt{rcv1}: We augment the original features of \texttt{rcv1} with noisy features obtained by randomly selecting a collection of features from the original dataset, and randomly permuting the rows of the selected features. We call this augmented dataset  
\texttt{rcv1-aug}.
Similarly, we form an augmented version of 
the dataset \texttt{real-sim} that we denote by
\texttt{real-sim-aug}.
Note that  all these datasets are sparse---\#nonzero entries in $\M{X}$, denoted by 
$\text{nnz}(\M{X})$, is quite small compared to $np$; and we use sparse matrices (scipy implementation) to deal with sparse matrix/vector multiplications.

We compare \texttt{FO-CLCNG} (we use a combination of column and constraint generation as $n,p$ are both large) 
with benchmark methods \texttt{Gurobi}, \texttt{SGD} and \texttt{SCS}. All methods are run under the settings explained in Section \ref{sec: simulations-plarge}, except that here we run \texttt{SGD} for $20,000$ epochs to arrive at an ARA$\sim 10^{-2}$--$10^{-3}$.

Table \ref{table: compare on datasets} presents the sizes of the datasets considered  and the runtime (s), ARA for different algorithms. We consider a sequence of 11 values of $\lambda = \kappa \lam_{\max}$ 
where $\kappa$ lies in the range $\kappa \in [0.003,0.3]$. To make the comparisons fair, all algorithms are run independently for different $\lambda$-values; and we present the average runtime and ARA for a value of $\lambda$. In the last row of each sub-table, we present the minimum, mean and maximum of the value $\|\hat{\B{\beta}}\|_0 + \|\hat{\B{\xi}}\|_0 $ over the path of $\lam$-values. The numbers are presented in the form of the triplet ``(min, mean, max)"; and 
$(\hat{\B{\beta}},\hat{\B{\xi}})$ is the solution obtained from
\texttt{FO-CLCNG}.

On the datasets \texttt{rcv1} and \texttt{rcv1-aug}, our proposed method \texttt{FO-CLCNG} outperforms \texttt{SCS} and \texttt{Gurobi} in runtime by a factor of $3X\sim 8X$ and delivers solutions of higher accuracy (ARA).
For other two datasets \texttt{news20} and \texttt{real-sim-aug}, \texttt{Gurobi} and \texttt{SCS} would run out of memory for some values of $\lambda$ (we set a 16GB memory limit). 
As our column-and-constraint generation procedure operates on a smaller reduced problem, it consumes less memory. The solutions of \texttt{FO-CLCNG} have high accuracy with ARA around $10^{-6}$. By comparing \texttt{FO-CLCNG} and \texttt{SGD}, we see that the runtime of 20,000 epochs of \texttt{SGD} is comparable to that of \texttt{FO-CLCNG} on the \texttt{rcv1} dataset, while it is slower than \texttt{FO-CLCNG} on other datasets. Note however that the optimization accuracy of the SGD solutions is significantly worse compared to \texttt{FO-CLCNG}.

\begin{table}[h!]
\begin{minipage}[b]{0.49\linewidth}\centering
\begin{tabular}{lcc}
\hline
\multicolumn{3}{c}{Dataset: rcv1}                         \\ 

\multicolumn{3}{c}{(n=16194, p=47237, nnz=1.20e+06)}           \\ 
\hline 

method   & Time (s) & ARA  \\ \hline  
\texttt{Gurobi}   &   442.0   & 0.0e+00      \\ 
\texttt{SCS}      &   1185.9   & 2.2e-04      \\ 
\texttt{SGD}      &   256.3   & 1.2e-02      \\ 
\texttt{FO-CLCNG} &   142.9   & 1.0e-06      \\  \hline
$\|\hat{\B{\xi}}\|_0+\|\hat{\B{\beta}}\|_0$ & \multicolumn{2}{c}{ (3212, 6538, 10580) } \\ 
\hline
\end{tabular}
\end{minipage}
\hfill
\begin{minipage}[b]{0.49\linewidth}\centering
\begin{tabular}{lcc}
\hline
\multicolumn{3}{c}{Dataset: rcv1-aug}                         \\ 

\multicolumn{3}{c}{(n=16194, p=236185, nnz=6.00e+06)}           \\ 
\hline 

method   & Time (s) & ARA  \\ \hline  
\texttt{Gurobi}   &   1590.5   & 0.0e+00      \\ 
\texttt{SCS}      &   3318.8   & 9.2e-04      \\ 
\texttt{SGD}      &   1419.3   & 1.4e-02      \\ 
\texttt{FO-CLCNG} &   442.2   & 1.1e-06      \\  \hline
$\|\hat{\B{\xi}}\|_0+\|\hat{\B{\beta}}\|_0$ & \multicolumn{2}{c}{ (3997, 6765, 10512) } \\ 
\hline
\end{tabular}
\end{minipage}

\vspace{0.2cm}
\begin{minipage}[b]{0.49\linewidth}\centering
\begin{tabular}{lcc}
\hline
\multicolumn{3}{c}{Dataset: news20}                         \\ 

\multicolumn{3}{c}{(n=15997, p=1355191, nnz=7.31e+06)}           \\ 
\hline 

method   & Time (s) & ARA  \\ \hline  
\texttt{Gurobi}   &   -   & -      \\ 
\texttt{SCS}      &   -   & -      \\ 
\texttt{SGD}      &   2014.4   & 4.3e-03      \\ 
\texttt{FO-CLCNG} &   112.0   & 0.0e+00      \\  \hline
$\|\hat{\B{\xi}}\|_0+\|\hat{\B{\beta}}\|_0$ & \multicolumn{2}{c}{ (7106, 10010, 13853) } \\ 
\hline
\end{tabular}
\end{minipage}
\hfill
\begin{minipage}[b]{0.49\linewidth}\centering
\begin{tabular}{lcc}
\hline
\multicolumn{3}{c}{Dataset: real-sim-aug}                         \\ 

\multicolumn{3}{c}{(n=57847, p=104795, nnz=1.47e+07)}           \\ 
\hline 

method   & Time (s) & ARA  \\ \hline  
\texttt{Gurobi}   &   -   & -      \\ 
\texttt{SCS}      &   -   & -      \\ 
\texttt{SGD}      &   2908.9   & 2.5e-03      \\ 
\texttt{FO-CLCNG} &   955.3   & 0.0e+00      \\  \hline
$\|\hat{\B{\xi}}\|_0+\|\hat{\B{\beta}}\|_0$ & \multicolumn{2}{c}{ (11975, 18931, 26632) } \\ 
\hline
\end{tabular}
\end{minipage}

\caption{ L1-SVM on real datasets with both $n,p$ large. We compare our method \texttt{F0-CLCNG} versus other benchmarks (in terms of runtime and ARA) on a range of $\lambda$-values, as discussed in the text. A ``-'' means that the method would not run due to memory limitations and/or numerical problems. The last row of every sub-table provides the (minimum, average, maximum)-tuple of the support-size $\|\hat{\B{\xi}}\|_0+\|\hat{\B{\beta}}\|_0$ where the minimum, average and maximum values are taken across the sequence of $\lambda$-values considered.}
	\label{table: compare on datasets}

\end{table}

\subsection{Computational results for Group-SVM}\label{sec-results-group-SVM}
We now study the performance of the column generation algorithm presented in Section \ref{sec: group-CG} for the Group-SVM Problem \eqref{group-SVM-intro}.

\noindent {\bf{Data Generation:}} Here, the covariates are drawn from a multivariate Gaussian with covariance $\B\Sigma$. 
The $p$ covariates are divided into $G$ groups each of the same size $p_G$. Within each group, covariates have pairwise correlation of $\rho$, and covariates are uncorrelated across groups (all variances are equal). Half of the samples are from the $+1$ class with (population) mean $\B\mu_+ = \left( \mathbf{1}_{p_G}, \ldots, \mathbf{1}_{p_G},  \mathbf{0}_{p_G}, \ldots, \mathbf{0}_{p_G}\right)$ (the number of sub-vectors $\mathbf{1}_{p_G}$ is $k_0$); the remaining samples
from class $-1$ have (population) mean $\B\mu_- = -\B\mu_+$. In the following example we take $\rho = 0.1$, $p_G = 10$ and $k_0 = 10$.

\begin{table*}[h!]
	\centering
	\resizebox{\textwidth}{!}{
		\begin{tabular}{lcccccccc} 
\multicolumn{7}{c}{Group-SVM ($p \gg n$)}\\
			\cmidrule{1-7}  & \multicolumn{2}{c}{$n=100, p=10K$} & \multicolumn{2}{c}{$n=300, p=10K$} & \multicolumn{2}{c}{$n=100, p=30K$}  \\ \hline
			Method          & Time (s)  & ARA & Time (s) & ARA  & Time (s)  & ARA   \\ \hline
			\texttt{SCS} & 78.1(16.05) & 8.7e-04(1.10e-04) & 82.6(14.65) & 7.0e-04(2.89e-04) & 287.4(21.38) & 1.1e-02(5.55e-03)  \\ 
			\texttt{Gurobi} & 120.8(5.16) & 3.4e-17(5.91e-17) & 321.7(14.19) & 3.4e-17(6.85e-17) & 503.8(26.53) & 6.8e-17(6.80e-17)  \\ 
			\texttt{FO-CLG} & 2.1(0.17) & 1.0e-16(1.14e-16) & 3.8(0.13) & 1.8e-16(1.91e-16) & 2.3(0.14) & 6.7e-17(1.15e-16)  \\ \hline 
		\end{tabular}
	}
	\caption{Training time (s) and ARA for Group-SVM versus various benchmarks on synthetic datasets, $\lam = 0.1 \lam_{\max}$.}
	\label{table: compare group SVM}
\end{table*}

\noindent {\bf{Comparison with benchmarks:}} We compare our column generation method (\texttt{FO-CLG}) with \texttt{SCS} and \texttt{Gurobi}. Both \texttt{SCS} and \texttt{Gurobi} are called through CVXPY under its default settings. Our method \texttt{FO-CLG} (cf Section~\ref{sec: group-CG}) applies column generation after initialization with the block coordinate descent procedure (cf Section \ref{sec: AGD}). 
We use a smoothing parameter $\tau= 0.2$ (for the hinge-loss) and use a CD method restricted to the top $n$ groups obtained via correlation screening (cf Section~\ref{corr-screen-1}).
We  use a reduced cost tolerance of $\epsilon=0.01$ in the column generation method.
Table \ref{table: compare group SVM} shows the results on synthetic datasets with $(n,p) = (100, 10K)$, $(300, 10K)$ and $(100, 30K)$ and set $\lam = 0.1 \lam_{\max}$. The reported values are based on 5 independent replications. We can see that on these examples, \texttt{FO-CLG} outperforms the other two methods by a large factor $\ge 30X$ in runtime, and also delivers a solution with high accuracy (as seen from the ARA values).

\subsection{Computational results for Slope-SVM}\label{sec-results-slope-SVM}
We present the computational performance of the column-and-constraint generation methods presented in Section  \ref{sec:slope-CCG} for the Slope-SVM Problem \eqref{Slope-intro}. 
The synthetic data is 
simulated as in Section~\ref{sec: simulations-plarge} --- we set $n=100, \ k_0=10, \ \rho=0.1$. 

\noindent{\bf{Comparison when $\lambda_{i}$s are not all distinct:}} We are not aware of any publicly available specialized implementation for the Slope SVM problem.
We use the {\texttt{CVXPY}} modeling framework  to model (See Section~\ref{epi-slope-1}) the Slope SVM problem and solve it using state-of-the art solvers like   Ecos and {Gurobi}.
We first consider a special instance of the Slope penalty~\eqref{slope-norm} 
that corresponds to the coefficients $\lambda_{i} = 2\tilde{\lambda}$ for $i \leq k_0$ and $\lambda_{i} = \tilde{\lambda}$ for $i>k_0$;
where $\tilde{\lambda} = 0.01 \lambda_{\max}$. We solve the resulting problem with both the Ecos and {Gurobi} solvers, denoted by 
``\texttt{CVXPY Ecos}"  and  ``\texttt{CVXPY Gurobi}" (respectively).  We compare them with our proposed column-and-constraint generation algorithm, referred to as 
``{\texttt{FO-CLCNG}}".  
For our method, we first run the first order algorithm presented in Section \ref{sec: AGD} (for $\tau= 0.2$) restricted to the $10n$ columns with highest absolute correlations with the response (the remaining coefficients are all set to zero). 
The column-and-constraint generation algorithm (cf Section \ref{sec:slope-CCG}) uses a tolerance level of $\epsilon=0.001$. We limit the number of columns added at each iteration to 10. For reference, we also report the run time of our algorithm by excluding the time taken by  
the initialization step---this is denoted by ``\texttt{CLCNG wo FO}".  The results, averaged over 5 replications,  are presented in Table~\ref{table:Slope-CVX}. Results in Table~\ref{table:Slope-CVX} indicate that our proposed method {\texttt{FO-CLCNG}} exhibits a $50$X--$110$X improvement compared to competing solvers. In some cases, when $p \geq 50K$,  \texttt{CVXPY Ecos} encounters numerical problems and hence would not run. 
On the other hand, we note that {\texttt{FO-CLCNG}} has the best ARA for $p \ge 20K$ even when compared with \texttt{CVXPY Gurobi}. This may be because the model size is large and \texttt{CVXPY Gurobi} appears to use a low-accuracy termination condition.

\begin{table*}[h!]
	\centering
	{Slope-SVM ($p \gg n$)}
	
	\begin{tabular}{cccccccc}
\cmidrule{1-8} & \multicolumn{2}{c}{\texttt{FO-CLCNG}} & \multicolumn{1}{c}{\texttt{CLCNG wo FO}} & \multicolumn{2}{c}{\texttt{CVXPY Ecos}}      & \multicolumn{2}{c}{\texttt{CVXPY Gurobi}} \\ \hline
$p$            & Time (s)     & ARA              & Time (s)                         & Time (s)           & ARA                & Time (s)           & ARA         \\ \hline
10k            & 1.7(0.07)   & 1.3e-06             & 1.2(0.05)                       & 64.3(6.81)          & 7.0e-12              & 84.0(2.86)          & 0.0e+00             \\ 
20k            & 3.0(0.01)   & 0.0e+00             & 2.5(0.00)                       & 130.5(3.17)          & 1.7e-05                & 221.9(5.64)         & 1.7e-05             \\ 
50k            & 7.1(0.35)   & 0.0e+00             & 6.6(0.34)                       & -                   & -                   & 842.3(20.06)         & 9.1e-06             \\ 
100k           & 16.5(0.02)   & 0.0e+00             & 15.9(0.02)                       & -                   & -                   & 1837.3(70.44)         & 4.1e-06             \\ \hline
\end{tabular}
	\caption{Training times and ARA of our column-and-constraint generation method for Slope-SVM versus \texttt{CVXPY}---we took $\lambda_{i}/\lambda_{j} = 2$ for all $i \in [k_0]$ and $j > k_0$ (as described in the text). When the number of features are 
		in the order of tens of thousands (here, $n=100$), our proposed method ({\texttt{FO-CLCNG}}) enjoys nearly a 100X speedup in run time. A `--' symbol denotes that the corresponding algorithm encountered numerical problems.}
	\label{table:Slope-CVX}
\end{table*}

\noindent{\bf{Comparison when $\lambda_{i}$s are distinct:}} Here we consider a different sequence of $\lambda$-values: Following~\cite{bellec2018slope}, we set $\lambda_j = \sqrt{\log(2p/j)} \tilde{\lambda}$ with $\tilde{\lambda} = 0.01 \lambda_{\max}$. 
We observed that \texttt{CVXPY} could not handle
even small instances of this problem (as the $\lambda_j$s are distinct) --- in particular, the Ecos solver crashed for $n=100,p=200$. We compare our proposed method \texttt{FO-CLCNG}, 
with the first order method (cf. Section~\ref{sec: AGD}) applied to the full smoothed Slope-SVM problem with $\tau =0.2$.
Due to the high per iteration cost of the first order method (\texttt{FOM}), we terminate the method after a few iterations (the associated ARA is reported within parenthesis).
Table \ref{table:Slope-FOM} compares our methods---we use the same synthetic dataset as in the previous Slope-SVM example and average the results over 5 replications (the first order method was run for one replication due to its long run time).

\begin{table*}[h!]
	\centering
	{Slope-SVM ($p \gg n$)}
	
	\begin{tabular}{cccccc}
		\cmidrule{1-6} 
		&\multicolumn{2}{c}{\texttt{FO-CLCNG} } &\multicolumn{1}{c}{ \texttt{CLCNG wo FO} }  &\multicolumn{2}{c}{\texttt{FOM}} \\   
		\toprule
		$p$ & Time (s) & ARA  & Time (s) & Time (s) & ARA  \\ 
		\midrule
		10k & 1.4(0.11)  &  0.0e+00(0.00e+00)   & 0.5(0.05)    & 164.6    & 3.0e-01 \\ 
		20k & 1.6(0.09)  &  0.0e+00(0.00e+00)   & 0.7(0.04)    & 427.3    & 3.2e-01 \\ 
		50k & 3.3(0.20)  &  0.0e+00(0.00e+00)   & 2.4(0.14)    & 1633.8    & 3.2e-01 \\ 
		\bottomrule
	\end{tabular}
	\caption{Training times and ARA of our column-and-constraint generation for Slope-SVM with coefficients $\lambda_j = \sqrt{ \log(2p/j) } \tilde{\lambda}$ (as in the text) on synthetic datasets with large number of features (here, $n=100$). Our proposed approach (\texttt{FO-CLCNG}) outperforms the first order methods in runtimes (by at least 100-times). In addition, \texttt{FO-CLCNG} delivers solutions of higher accuracy compared to \texttt{FOM}.}\label{table:Slope-FOM}
\end{table*}

\section{Acknowledgements} The authors would like to thank the Action Editor and three anonymous reviewers for their helpful comments and constructive feedback that helped improve the paper. This research was supported in part, by grants from the Office of Naval Research: ONR-N000141812298 (YIP), the National Science Foundation: NSF-IIS-1718258 and IBM.

\appendix

\section{Appendix}\label{sec:appendix}

\subsection{Another formulation for the Slope norm}\label{epi-slope-1}
Without loss of generality, we consider a vector $\B\alpha \geq \M{0}$. 
 Now note that for every $m \in [p]$, we can represent 
 $\alpha_{(1)} + \ldots + \alpha_{(m)} \leq s_{m}$ as 
\begin{equation}\label{slope-rob-1}
\alpha_{(1)} + \ldots + \alpha_{(m)} \leq s_{m}, ~~ \B\alpha \geq \M{0} \iff  \begin{cases} \M{0} \leq \B\alpha \leq \theta_{m} \M{1} + \M{v}_{m} \\ m \theta_{m} + \M{1}^T\M{v}_{m} \leq s_{m} \end{cases} 
\end{equation}
with variables $\B\alpha, \M{v}_{m} \in \mathbb{R}^{p}, \theta_{m} \in \mathbb{R}$ and $\M{1} \in \mathbb{R}^p$ being a vector of all ones. 
Note that the rhs formulation in~\eqref{slope-rob-1} has $O(p)$ variables and $O(p)$-constraints. 
Note that we can write:
 $$\sum_{j=1}^p \lambda_j  \alpha_{(j)}  = \sum_{m=1}^{p} \tilde{\lambda}_{m} (\alpha_{(1)} + \ldots + \alpha_{(m)})  $$
 where, $\tilde{\lambda}_{m} = \lambda_{m} - \lambda_{m-1}$ for all $m \in \{ 1, \ldots, p \}.$ 
Therefore, representing $\sum_{j=1}^p \lambda_j  \alpha_{(j)} \leq \eta$ will require a representation~\eqref{slope-rob-1} for $m = 1, \ldots, p$ --- this will lead to a formulation with $O(p^2)$ variables and $O(p^2)$ constraints, which can be quite large as soon as $p$ becomes a few hundred.

\bibliographystyle{abbrvnat}
\small{\bibliography{L1-SVM_2021_0325.bib}}
			
\end{document}